\def\eqref#1{equation~\ref{#1}}
\newcommand{\paren}[1]{\left(#1\right)}
\newcommand{\norm}[1]{\left\|#1\right\|}
\newcommand{\inner}[2]{\left\langle#1, #2\right\rangle}
\newcommand{\R}{\mathbb{R}}
\def\bfH{{\mathbf{H}}}
\def\bfM{{\mathbf{M}}}
\def\bfQ{{\mathbf{Q}}}
\def\bfU{{\mathbf{U}}}
\def\bfV{{\mathbf{V}}}
\def\bfY{{\mathbf{Y}}}
\def\bfa{{\mathbf{a}}}
\def\bfg{{\mathbf{g}}}
\def\bfu{{\mathbf{u}}}
\def\bfv{{\mathbf{v}}}
\def\bfx{{\mathbf{x}}}
\def\bfy{{\mathbf{y}}}
\newcommand{\evec}{\texttt{Top1}}
\newtheorem{defin}{Definition}
\newtheorem{theorem}{Theorem}
\newtheorem{lemma}{Lemma}
\newtheorem{asump}{Assumption}
\newcommand{\calF}{\mathcal{F}}
\algnewcommand\algorithmicparfor{\textbf{parfor}}
\algnewcommand\algorithmicpardo{\textbf{do}}
\algnewcommand\algorithmicendparfor{\textbf{end\ parfor}}
\title{Provable Model-Parallel Distributed Principal Component Analysis with Parallel Deflation}
\author{%
  Fangshuo Liao\textsuperscript{1}, ~Wenyi Su\textsuperscript{1}, ~Anastasios Kyrillidis\textsuperscript{1}\\
  \textsuperscript{1}Computer Science Department, Rice University\\
  \texttt{\{Fangshuo.Liao,bs82,anastasios\}@rice.edu}
}
\begin{document}

\maketitle

\begin{abstract}
We study a distributed Principal Component Analysis (PCA) framework where each worker targets a distinct eigenvector and refines its solution by updating from intermediate solutions provided by peers deemed as ``superior''. 
Drawing intuition from the deflation method in centralized eigenvalue problems, our approach breaks the sequential dependency in the deflation steps and allows asynchronous updates of workers, while incurring only a small communication cost. 
To our knowledge, a gap in the literature -- \textit{the theoretical underpinning of such distributed, dynamic interactions among workers} -- has remained unaddressed.
This paper offers a theoretical analysis explaining why, how, and when these intermediate, hierarchical updates lead to practical and provable convergence in distributed environments. 
Despite being a theoretical work, our prototype implementation demonstrates that such a distributed PCA algorithm converges effectively and in scalable way: 
through experiments, our proposed framework offers comparable performance to EigenGame-$\mu$, the state-of-the-art model-parallel PCA solver.
\end{abstract}

\vspace{-0.1cm}
\section{Introduction}
\vspace{-0.1cm}
Currently, datasets have gotten large, encompassing billions, if not trillions, of entries spanning various domains \cite{zhong_2019_publaynet, liu_2023_largest, penedo2024fineweb, dolma, wangDiffusionDBLargescalePrompt2022, schuhmann2022laion, raffel2020exploring, penedo2024refinedweb, xue2020mt5, abadji2022towards}. 
This scale advanced various distributed optimization protocols, such as federated learning \cite{brendan_2016_communicationefficient}, and, notably, the development of multiple distributed ML software packages \cite{kim_optional, dean2012large}. 
Specialized frameworks such as Ray \cite{liang_2018_rllib}, Spark \cite{meng2016mllib}, Hadoop \cite{hadoop}, and JAX \cite{jax2018github} have become popular due to their ability to speed computations significantly. 

However, at the algorithmic level, most distributed implementations simulate the behavior of the centralized versions of the underlying algorithms. 
That is, how distributed algorithms navigate the parameter landscape is often designed such that we achieve a similar outcome as if data is available in one location. There are a few key reasons for this: \vspace{-.2cm}
\begin{itemize}[leftmargin=*]
    \item \textbf{Mathematical Understanding}: When there is sufficient theoretical understanding of the centralized version, it is often a desired goal to attain the same result by designing algorithms to emulate the centralized counterparts. This ensures consistency and theoretical understanding. \vspace{-.1cm}
    \item \textbf{Algorithm Simplicity}: Since centralized algorithms are better understood, distributed variants that replicate the algorithms' outcomes enjoy the same simplicity and interpretation. \vspace{-.1cm}
    \item \textbf{Benchmarking}: By simulating the centralized execution, comparing the accuracy and convergence properties of the distributed implementation in practice becomes easier.  \vspace{-.15cm}
\end{itemize}
Yet, precisely simulating centralized algorithms in a distributed fashion could pose some challenges.
Take as a characteristic feature the notion of \textit{synchrony} in distributed implementations, as this leads to training dynamics closer to centralized training. 
Synchronization among workers means proper orchestration \cite{zeng_2024_a, tan_2022_adaptive}:
Synchronized implementations that wait for some or all workers to finish each iteration before proceeding can suffer from stragglers and load imbalance \cite{wang_2023_fluid, ambati_2019_understanding}.
Yet, while asynchronous motions seem like a favorable alternative, developing an asynchronous learning method is often complicated \cite{stripelis_2022_semisynchronous, tyagi_2023_accelerating, huba2022papaya}, set aside the lack of theoretical understanding in many cases.

This work addresses a fundamental question in distributed systems: whether and how to orchestrate workers, by focusing on a specific problem - Principal Component Analysis (PCA) \cite{pearson1901liii, hotelling1933analysis, wold_1987_principal, majumdar2009image, wang2013sparse, d2007direct, jiang2011anomaly, zou2006sparse}. While PCA is conceptually straightforward, developing efficient distributed implementations remains an active research challenge, recently reinvigorated by the introduction of EigenGame \cite{gemp_2020_eigengame, gemp2022eigengame}.
EigenGame departs from traditional data-parallel approaches to distributed PCA, where each worker processes a portion of the data or covariance matrix. Instead, it introduces a model-parallel framework inspired by game theory, where each worker computes specific principal components. This novel approach enables PCA computation on massive datasets, as demonstrated with the Meena conversation model and ResNet-200 \cite{he2015deep} activations on ImageNet \cite{imagenet2009deng}.
However, EigenGame's theoretical guarantees rely on a strict hierarchical structure: each worker handles exactly one component and must wait for the convergence of higher-priority principal components (those with larger eigenvalues) before proceeding. While effective, this hierarchical dependency has theoretical limitations, as the impact of approximation errors in higher-ranked eigencomponents on subsequent calculations remains uncharacterized in \cite{gemp_2020_eigengame}.

\textbf{Our approach and contributions.} 
This work advances model-parallel distributed PCA by building upon the collaborative computation framework introduced by \cite{gemp_2020_eigengame,gemp2022eigengame}. Our approach proposes a provable parallel computation of principal components without imposing strict sequential dependencies among workers. Our primary contributions are:
\vspace{-0.2cm}
\begin{itemize}[leftmargin=*]
    \item We introduce a distributed PCA framework that fundamentally transforms computational dynamics. Different from traditional sequential computational model, Our approach enables multiple workers to compute distinct principal components in parallel.\vspace{-0.1cm}
    \item In cases where the covariance is unknown or cannot be efficiently estimated, our algorithm can be modified to accommodate data that comes in mini-batches.\vspace{-0.1cm}
    \item We provide theory that validates the convergence properties of our proposed algorithm. By formalizing the interaction between parallel computations and convergence rates, we establish a theoretical benchmark for distributed PCA algorithms. This contribution underscores our algorithm's efficiency and enhances the understanding of parallel deflation processes in PCA. \vspace{-0.1cm}
    \item Through experiments, we demonstrate the practical efficacy of our algorithm. Our approach meets the performance of existing algorithms on datasets as large as ImageNet \cite{imagenet2009deng}, justifying our theory and highlighting the applicability of our method. \vspace{-0.2cm}
\end{itemize}

\subsection{Related Works}
\vspace{-0.2cm}
\noindent \textbf{Eigenvector-based approaches.} PCA has been fundamental to statistical data analysis since its introduction by Pearson in 1901 [cite]. The method was later formalized within a multivariate analysis framework by Hotelling \cite{hotelling1933analysis}, establishing its theoretical foundations. In its classical form, PCA involves computing an empirical covariance matrix from the data, followed by its eigendecomposition. This formulation allows the application of numerous efficient numerical methods, including QR decomposition \cite{golub1996matrix}, xLAHQR \cite{dhillon2000xlahqr}, the Lanczos method \cite{lanczos1950iteration}, and ARPACK \cite{lehoucq1998arpack}, some of which are implemented in numerical linear algebra packages such as ScaLAPACK \cite{scalapack1997}. 
These methods are effective but often require complete knowledge of the covariance matrix prior to computation.

\noindent\textbf{Centralized stochastic approaches.} With large datasets, iterative and gradient-based methods for PCA have gained prominence.
Krasulina and Oja \& Karhunen proposed two of the earliest stochastic gradient descent methods for online PCA \cite{krasulina1969method, oja1985stochastic}. 
The application of the
least square minimization to the PCA has also received attention \cite{miao1998fast, yang1995projection, bannour1995principal, kung1994adaptive}.
More recently, \cite{arora2012stochastic} and \cite{shamir2015stochastic} have proposed efficient stochastic optimization methods that adapt to the streaming model of data (stochastic) and focus on the theoretical guarantees of gradient-based methods in such non-convex scenarios; see also \cite{boutsidis2014online, garber2015online, shamir2016fast, kim2020stochastic}.
Other approaches include manifold methods \cite{demidovich2024streamlining, chen2024sequential, wang2023incremental, absil2008optimization}, Frank-Wolfe methods \cite{beznosikov2023sarah}, Gauss-Newton methods \cite{zhou2023stochastic}, coordinate descent methods \cite{lei2016coordinate}, accelerated methods \cite{xu2018accelerated}, as well as variants of the PCA problem itself \cite{journee2010generalized, yuan2013truncated, han2014scale, kim2019simple, kim2019scale}.
Nevertheless, these methods are primarily designed as centralized algorithms.

\noindent \textbf{Data-parallel distributed approaches.}
Prior distributed PCA approaches span several key directions. One line of work utilizes randomized linear algebra and SVD projections in distributed settings, yielding strong theoretical guarantees \cite{kannan2014principal, liang2014improved, boutsidis2016optimal, fan2019distributed}. For distributed subspace computation, recent methods combine FedAvg with Orthogonal Procrustes Transformations \cite{li2021communication,mcmahan2017communication,schonemann1966generalized, cape2020orthogonal}. Approaches for computing leading principal components leverage both convex \cite{garber2017communication} and Riemannian optimization for improved efficiency \cite{huang2020communication,alimisis2021distributed}. Notable recent advances include an asynchronous Riemannian gradient method that achieves low computational and communication costs \cite{wang2023incremental}. The field has also expanded to address specialized scenarios, including Byzantine-robust computation \cite{charisopoulos2022communication, zari2022membership}, streaming data analysis \cite{allen2017first, yu2017single}, shift-and-invert preconditioning \cite{garber2016faster}, and coreset-based approaches \cite{feldman2020turning}.

\textbf{Model-parallel distributed approaches.} While most prior work focuses on data-parallel approaches, where each machine computes all principal components using local data, DeepMind's EigenGame \cite{gemp_2020_eigengame} introduced a novel model-parallel framework. Their approach reformulates PCA as a collaborative game, where each principal component computation acts as a player maximizing its utility through Riemannian gradient ascent. Though initially presented as a sequential process with proved convergence guarantees, EigenGame extends to a distributed setting where components are optimized simultaneously across machines. While this parallel extension offers practical benefits, its theoretical convergence properties remain unanalyzed, a limitation that persists in subsequent improvements \cite{gemp2022eigengame}.

Our work complements existing literature mostly theoretically, but also practically. By eliminating the requirement for sequential completion of principal components, our algorithmic framework achieves comparable empirical performance to EigenGame \cite{gemp_2020_eigengame} on large-scale datasets. Crucially, we establish rigorous convergence guarantees for parallel computation, providing the theoretical foundation that has been missing in existing model-parallel approaches.

\vspace{-0.1cm}
\section{Problem statement and background}
\vspace{-.2cm}
Let $\bfY\in\R^{n\times d}$ be the matrix representing an aggregation of $n$ properly scaled, centered data points, each with $d$ features. The empirical covariance matrix is given by $\bm{\Sigma} = \bfY^\top\bfY\in\R^{d\times d}$. Let $\bfu_k^\star$ and $\lambda_k^\star$ be the $k$th eigenvector and eigenvalue of $\bm{\Sigma}$, with $\lambda_1^\star\geq \dots\geq \lambda_d^\star$. Then $\bfu_k^\star$ is the $k$th principal component of the data matrix $\bfY$. Therefore, when $\bm{\Sigma}$ can be easily computed, principal component analysis aims at finding the top-$K$ eigenvectors of the empirical covariance matrix $\bm{\Sigma}$, where $K \leq d$.

\textbf{The leading eigenvector problem.} Finding the leading eigenvector is the cornerstone of finding multiple eigenvectors, and is thus utilized by many PCA algorithms. Mathematically, the problem of finding the leading eigenvector $\bfu_1^\star$ can be formulated as the following optimization problem: \vspace{-0.2cm}
\begin{align}
    \label{eq:top1_evec}
    \bfu_1^\star  = 
    \underset{\substack{ \mathbf{v} \in  \mathbb{R}^d: \|\mathbf{v}\|_2 = 1 \\} }{\arg\max}
    \mathbf{v}^{\top} \boldsymbol{\Sigma} \mathbf{v}. \\[-15pt] \nonumber
\end{align}
In practice, algorithms like power iteration and Hebb's rule are used to solve the leading eigenvector.
\begin{defin}[Power Iteration]
    \label{def:power_iter}
    The power iteration algorithm ${\normalfont \texttt{PowIter}}\paren{\bm{\Sigma},\bfv, T}$ outputs a vector $\bfx_T$ based on the following iterates: % with $\bfx_0 = \bfv$
    \vspace{-0.2cm}
    \begin{gather*}
        \bfx_0 = \bfv;\quad \hat{\bfx}_{t+1} = \bm{\Sigma}\bfx_t; \quad 
        \bfx_{t+1} = \hat{\bfx}_{t+1} /\norm{\hat{\bfx}_{t+1}}_2.
    \end{gather*}
\end{defin}
\begin{defin}[Hebb's Rule]
    \label{def:hebb_rule}
    The Hebb's Rule ${\normalfont \texttt{Hebb}}\paren{\bm{\Sigma},\bfv, T}$ with some fixed step size $\eta$ outputs a vector $\bfx_T$ based on the following iterates: % with $\bfx_0 = \bfv$
    \begin{gather*}
        \bfx_0 = \bfv;\quad \hat{\bfx}_{t+1} = \bfx_t + \eta \bm{\Sigma}\bfx_t; \quad 
        \bfx_{t+1} = \hat{\bfx}_{t+1} /\norm{\hat{\bfx}_{t+1}}_2.
    \end{gather*}
    \vspace{-0.7cm}
\end{defin}
Under mild assumptions, the output $\bfx_T$ of both the power iteration and Hebb's rule converges to the top eigenvector of the input matrix $\bm{\Sigma}$, as the number of steps $T\rightarrow\infty$. 
Notably, the power iteration enjoys a linear convergence rate \cite{shamir2015stochastic}.

\textbf{Top-$K$ eigenvector using sequential deflation.} An extension of (\ref{eq:top1_evec}) is the top-$K$ eigenvector problem, where one aims to find $\bfu_1^\star,\dots\bfu_K^\star$. 
Since $\bfu_1^\star,\dots,\bfu_K^\star$ form an orthonormal set, finding the top-$K$ eigenvector can be mathematically formulated as: \vspace{-0.15cm}
\begin{equation}
    \label{eq:top_k_evec}
    \bfU^\star = [\bfu_1^\star,\dots,\bfu_K^\star] \in \underset{\bfV\in\{\bfQ_{:,:K}:\;\bfQ\in\text{SO}(d)\}}{\arg\max}
    \left\langle \boldsymbol{\Sigma} \mathbf{V}, \mathbf{V} \right \rangle, 
\end{equation}
where $\text{SO}(d)$ denotes the group of rotations about a fixed point in $d$-dimensional Euclidean space. 
A classical way to solve (\ref{eq:top_k_evec}) is through \textit{deflation} \cite{hotelling1933analysis}.
Deflation operates in the following manner. Once the top component $\mathbf{u}_1^\star$ is approximated, the matrix $\bm{\Sigma}$ undergoes further processing to reside in the subspace orthogonal to the one spanned by the first eigenvector. This process is iterated by finding the leading eigenvector as in \eqref{eq:top1_evec} on the deflated matrix, resulting in an approximation of the second component $\mathbf{u}_2^\star$, and so forth, as described below:
\begin{gather}
    \label{eq:deflation}
    \bm{\Sigma}_1 = \bm{\Sigma};\quad \bfv_k = \evec\paren{\bm{\Sigma}_k, \hat{\bfv}_k, T}; \nonumber \\
    \bm{\Sigma}_{k+1} = \bm{\Sigma}_k - \bfv_k\bfv_k^\top\bm{\Sigma}_k\bfv_k\bfv_k^\top,    
\end{gather}
where $\evec\paren{\bm{\Sigma}_k, \hat{\bfv}_k, T}$ abstractly denotes any iterative algorithm initialized at $\hat{\bfv}_k$ and returns a normalized approximation of the top eigenvector of the deflated matrix $\bm{\Sigma}_k$ after $T$ iterations of execution. Consider the eigendecomposition $\bm{\Sigma} = \sum_{k'=1}^d\lambda_{k'}^\star\bfu_{k'}^\star\bfu_{k'}^{\star\top}$. When $T\rightarrow\infty$ and $\evec\paren{\bm{\Sigma}_k, \hat{\bfv}_k, T}$ solves the top eigenvector of $\bm{\Sigma}$ exactly, one can show that $\bm{\Sigma}_k = \sum_{k'=k}^d\lambda_{k'}^\star\bfu_{k'}^\star\bfu_{k'}^{\star\top}$ and $\bfv_k = \bfu_k^\star$. However, when $T$ is finite, it is shown in \cite{liao2023errorpropagation} that each $\evec\paren{\bm{\Sigma}_k, \hat{\bfv}_k, T}$ produces a non-negligible error that accumulates and propagates through the deflation process.

\textbf{Stochastic algorithm to find top-1 principal component.} When the dataset becomes large, the covariance matrix $\bm{\Sigma}$ may not be efficiently computed, making the previous routine of first computing the covariance matrix and then its eigenvector infeasible. Alternatively, people estimate $\bm{\Sigma}$ with $\hat{\bm{\Sigma}} = \hat{\bfY}^\top\hat{\bfY}$, where $\hat{\bfY}$ is a mini-batch of the dataset. In this case, Hebb's rule can be written as
\[
    \hat{\bfx}_{t+1} = \bfx_t + \eta \hat{\bfY}^\top\paren{\hat{\bfY}\bfx_t}; \bfx_{t+1} = \hat{\bfx}_{t+1} / \norm{\hat{\bfx}_{t+1}}_2
\]
Notice that the stochastic estimate of the covariance matrix $\hat{\bm{\Sigma}}$ is never explicitly computed.

\section{The Parallel Deflation Algorithm}
\label{sec:algorithm}
\begin{wrapfigure}{r}{0.52\textwidth}
    \vspace{-0.8cm}
    \begin{minipage}{0.52\textwidth}
        \begin{algorithm}[H]
            \caption{Parallel Deflation}
            \label{alg:parallel_deflation}
            \begin{algorithmic}[1]
                \Require $\bm{\Sigma}\in\R^{d\times d}$; \# of workers $K$; sub-routine for top eigenvector $\texttt{PCA}(\cdot,\cdot,\cdot)$; \# of iterations $T$; global communication rounds $L\geq K$.
                \Ensure Approximate eigenvectors $\left\{\bfv_k\right\}_{k=1}^K$.
                \For{$k=1,\dots, K$}
                    \State Randomly initialize $\hat{\bfv}_{k,\texttt{init}}$ with unit norm;
                \EndFor 
                \For{$\ell=1,\dots, L$}
                    \ParFor{$k =1,\dots, K$}
                        \If{$k\leq \ell$}
                            \State \text{Receive $\bfv_{1,\ell-1},\dots,\bfv_{k-1,\ell-1}$}
                            \State $\bm{\Delta}_{k',\ell} = \bfv_{k',\ell-1}\bfv_{k',\ell-1}^\top\bm{\Sigma}\bfv_{k',\ell-1}\bfv_{k',\ell-1}^\top$
                            \State $\bm{\Sigma}_{k,\ell} = \bm{\Sigma} - \sum_{k'=1}^{k-1}\bm{\Delta}_{k',\ell}$
                            \State $\bfv_{k,\ell} \leftarrow \texttt{Top1}\paren{\bm{\Sigma}_{k,\ell},\bfv_{k,\ell-1},  T}$
                            \State \text{Broadcast $\bfv_{k,\ell}$}
                        \Else
                            \State $\bfv_{k,\ell} := \hat{\bfv}_{k,\texttt{init}}$;
                        \EndIf
                    \EndParFor
                \EndFor
                \State\textbf{return} $\left\{\bfv_{k,L}\right\}_{k=1}^K$
            \end{algorithmic} 
        \end{algorithm}
    \end{minipage}
    \vspace{-0.5cm}
\end{wrapfigure} 
\textbf{Algorithm overview.}
Our framework distributes the computation of $K$ principal components across $K$ distinct workers, where worker $k$ is responsible for computing the $k$-th principal component. Our innovation lies in reformulating the traditional deflation process for distributed settings. The conventional sequential deflation requires the $k$-th eigenvector computation to wait for the completion of all previous $k-1$ eigenvectors, creating a strict dependency chain that inhibits parallelization. We overcome this limitation through an iterative two-phase approach:

--\textit{Initial Estimation}: Each worker $k$ computes an approximate version of the $k$-th deflated matrix using preliminary estimates of the first $k-1$ eigenvectors from other workers.

--\textit{Iterative Refinement}: Workers computing the first $k-1$ eigenvectors continuously provide updated estimates to worker $k$, enabling progressive refinement of the deflated matrix and the $k$-th eigenvector estimation.

This parallel approach eliminates the need for worker $k$ to wait for the complete convergence of the first $k-1$ eigenvectors before beginning its computation. The complete specification of our parallel deflation algorithm is presented in Algorithm~\ref{alg:parallel_deflation}. We detail the explanation below:

The computation process is divided into $L$ communication rounds (\textbf{Line 4}). 
In the $\ell$th communication round, the $k$th worker will compute an approximation of the $k$th principal component $\bfv_{k,\ell}$ by running their own sub-routine in parallel, following the rule that the $k$th worker only deflates its matrix and starts computing the $k$th principal component after the first $k-1$ workers have computed some rough estimation of the first $k-1$ principal components (\textbf{Lines 7-10}). 
Therefore, in the $\ell$th communication round, there can be two scenarios for worker $k$: $i)$ if $\ell < k$, this means that not all of the first $k-1$ workers have computed some approximation of their own principal component.
Therefore, worker $k$ does not deflate the matrix and output $\bfv_{k,\ell} = \hat{\bfv}_{k,\texttt{init}}$; $ii)$ If $\ell \geq k$, then the first $k-1$ workers have at least computed one approximation of their own principal component. 
In this case, worker $k$ deflates the matrix using the most updated vectors $\bfv_{1,\ell-1},\dots\bfv_{k-1,\ell-1}$ (\textbf{Line 7}), compute its approximation of the $k$th principal component by calling the $\texttt{Top1}\paren{\cdot}$ on the deflated matrix starting from its output in the previous communication round (\textbf{Line 10}), and then broadcast the current approximation to the other workers for the next communication round (\textbf{Line 11}). 
An illustration of the algorithm is given in Figure~\ref{fig:diagram}.

\begin{wrapfigure}{l}{0.45\textwidth}
    \vspace{-0.8cm}
    \begin{minipage}{0.45\textwidth}
        \hspace{-0.4cm}\includegraphics[width=1.1\textwidth]{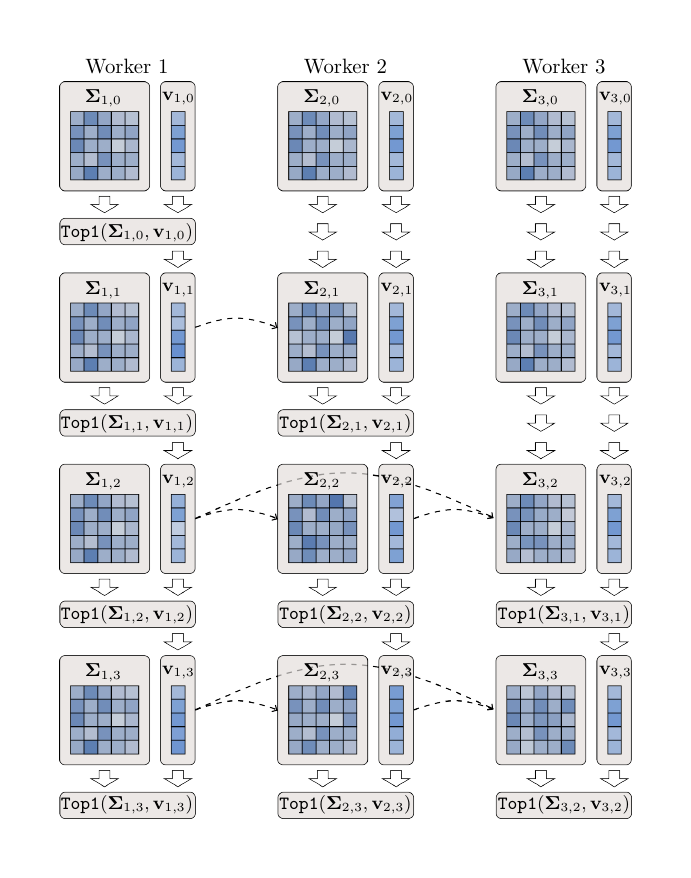}
    \end{minipage}
    \caption{Illustration of the parallel deflation algorithm.}
    \label{fig:diagram}
    \vspace{-0.8cm}
\end{wrapfigure}
\textbf{Extension to Stochastic PCA.}
The algorithm described above can be applied to the case where the covariance matrix is either known or can be efficiently estimated. However, in many machine learning scenarios, the covariance matrix may not be direcly accessible. For instance, when data drawn from an underlying distribution comes in a streaming fashion \cite{allen2017first}, the traditional approach of first estimating the covariance matrix and then solves for its eigenvector is no longer efficient. Moreover, for large datasets that contains hundreds of thousands of features, it is impossible to compute or even store the covariance matrix \cite{gemp2022eigengame,gemp_2020_eigengame}. In these cases, our algorithm can be adapted to estimate the principal components in a stochastic fashion.

Let $\hat{\bfY}$ denote the mini-batch that the algorithm receives in the $t$th iteration. Starting from \textbf{Line 8}, whose major computation burden is on $\bfv_{k',\ell-1}^\top\bm{\Sigma}\bfv_{k',\ell-1}$, we notice that the covariance matrix is estimated as $\bm{\Sigma}\approx \hat{\bm{\Sigma}} = \hat{\bfY}^\top\hat{\bfY}$. In this case, we have: \vspace{-0.1cm}
\begin{align*}
    \bfv_{k',\ell-1}^\top\hat{\bm{\Sigma}}\bfv_{k',\ell-1} & = \bfv_{k',\ell-1}^\top\hat{\bfY}^\top\hat{\bfY}\bfv_{k',\ell-1}\\
    & = \|\hat{\bfY}\bfv_{k',\ell-1}\|_2^2.
\end{align*}
Therefore, each $\bm{\Delta}_{k',\ell}$ in Algorithm~\ref{alg:parallel_deflation} can be written as $\bm{\Delta}_{k',\ell} = \|\hat{\bfY}\bfv_{k',\ell-1}\|_2^2\bfv_{k',\ell-1}\bfv_{k',\ell-1}^\top$. Thus, \textbf{Line 9} becomes: \vspace{-0.2cm}
\begin{align*}
    \bm{\Sigma}_{k,\ell} \approx\hat{\bm{\Sigma}_k} = \hat{\bfY}^\top\hat{\bfY} - \sum_{k'=1}^{k-1}\|\hat{\bfY}\bfv_{k',\ell-1}\|_2^2\bfv_{k',\ell-1}\bfv_{k',\ell-1}^\top. \\[-15pt] \nonumber
\end{align*}
This new form of $\bm{\Sigma}_{k,\ell}$ allows an efficient estimation of the matrix-vector product $\bm{\Sigma}_{k,\ell}\bfx$, as in: \vspace{-0.2cm}
\begin{align}
    \hat{\lambda}_{k'} & = \|\hat{\bfY}\bfv_{k'}\|_2^2;\quad \forall k'\in[k-1]\label{eq:compute_lambda_est}; \quad 
    \bm{\Sigma}_{k,\ell}\bfx = \hat{\bfY}^\top\hat{\bfY}\bfx_t - \sum_{k'=1}^{k-1}\hat{\lambda}_{k'}\paren{\bfv_{k',\ell-1}^\top\bfx_t}\cdot \bfv_{k',\ell-1}. \\[-20pt] \nonumber
\end{align}
In the current form of Algorithm~\ref{alg:parallel_deflation}, the computation of \textbf{Lines 8-9} takes $O\paren{Kd^2}$ time. 
Moreover, when calling the $\texttt{Top1}$ function in \textbf{Lines 10}, any matrix-vector multiplication $\bm{\Sigma}_{k,\ell}\bfx$ will take $O\paren{d^2}$ time. Notice that in (\ref{eq:compute_lambda_est}), the complexity of computing each $\bfY\bfv_{k'}$ is $O\paren{nd}$. Thus computing $\hat{\lambda}_{k'}$ takes $O\paren{nd}$. In total,(\ref{eq:compute_lambda_est}) 
has a complexity of $O\paren{Knd}$. In (\ref{eq:compute_lambda_est}), computing the first term $\hat{\bfY}^\top\hat{\bfY}\bfx$ involves computing first $\bfy_t = \hat{\bfY}\bfx_t$, which takes $O\paren{nd}$, and then $\hat{\bfY}^\top\bfy_t$, which also takes $O\paren{nd}$. Thus, computing the first term $\hat{\bfY}^\top\hat{\bfY}\bfx_t$ takes $O\paren{nd}$ in total. For the second term, each summand takes $O(d)$ to compute, giving the complexity of computing the second term as $O\paren{kd}$. Therefore, each iteration of (\ref{eq:compute_lambda_est}) takes $O\paren{(n+k)d}$. This implies a saving in the computation cost, since in this case, $n$ is the batch size and can be much smaller than $d$. The complete algorithm in the stochastic setting is given in Algorithm~\ref{alg:sto_para_defl} in the Appendix.

\textbf{Communication Analysis:}
Let $C_{\text{comm}}$ be the time for one all-reduce operation across workers. The total communication cost per iteration is:
\begin{equation}
    T_{\text{comm}} = \frac{1}{2} K(K-1) C_{\text{comm}} \cdot d
\end{equation}
For high-latency environments,  $C_{\text{comm}}$ can be large, resulting in a larger communication cost. In this case, one could use a larger $T$, leading to a smaller communication cost (see Appendix \ref{sec:ablation}). Additionally, we should notice that the communication can happen in parallel as the local computation of the eigenvectors. From this perspective, our algorithm also has the potential to be extended to an asynchronous version, where the eigenvectors are updated asynchronously, resulting in a non-blocking computation on each worker.

\textbf{Connection with EigenGame.} The EigenGame \cite{gemp_2020_eigengame} considers the problem of solving the top-$K$ eigenvectors of a matrix as a game between $K$ players, with the $k$th player solving $\bfv_k$ by maximizing its utility: $\bfv_k = \arg\max_{\bfv:\norm{\bfv}_2=1}\mathcal{U}_k\paren{\bfv\mid \bfv_1,\dots\bfv_{k-1}}$, where: \vspace{-0.2cm}
\begin{align}
    \label{eq:eiggame_util}
    \mathcal{U}_k\paren{\bfv\mid \left\{\bfv_{k'}\right\}_{k'=1}^{k-1}} = \bfv^\top\bm{\Sigma}\bfv - \sum_{k'=1}^{k-1}\frac{\paren{\bfv_{k'}^\top\bm{\Sigma}\bfv}^2}{\bfv_{k'}^\top\bm{\Sigma}\bfv_{k'}}. \\[-15pt] \nonumber
\end{align}
Similarly, the deflation algorithm in (\ref{eq:deflation}) also bears a game formulation, where the utility of the $k$th player is given by: \vspace{-0.15cm}
\begin{small}
\begin{align}
    \label{eq:deflation_util}
    \mathcal{V}_k\paren{\bfv\mid \left\{\bfv_{k'}\right\}_{k'=1}^{k-1}} = \bfv^\top\paren{\bm{\Sigma} - \sum_{k'=1}^{k-1}\bfv_{k'}\bfv_{k'}^\top\bm{\Sigma}\bfv_{k'}\bfv_{k'}^\top}\bfv = \bfv^\top\bm{\Sigma}\bfv - \sum_{k'=1}^{k-1}\bfv_{k'}^\top\bm{\Sigma}\bfv_{k'}\cdot\paren{\bfv_{k'}^\top\bfv}^2. \\[-15pt] \nonumber
\end{align}
\end{small}
It should be noted that both the EigenGame utility $\mathcal{U}_k$ and the deflation utility $\mathcal{V}_k$ depend on only the policy of the first $k-1$ players. Moreover, when the first $k-1$ players recovers the top-$(k-1)$ eigenvectors exactly, we shall have: \vspace{-0.2cm}
\begin{align*}
    \mathcal{V}_k\paren{\bfv\mid \left\{\bfu_{k'}^\star\right\}_{k'=1}^{k-1}} = \bfv^\top\bm{\Sigma}\bfv - \sum_{k'=1}^{k-1}\lambda_{k'}^\star\paren{\bfv^\top\bfu_{k'}^\star}^2 = \mathcal{U}_k\paren{\bfv\mid \left\{\bfu_{k'}^\star\right\}_{k'=1}^{k-1}}. \\[-15pt]\vspace{-0.1cm}
\end{align*}
To this end, we can also show that the set of true eigenvectors $\{\bfu_k^\star\}_{k=1}^K$ is the unique strict Nash Equilibrium defined by the utilities in (\ref{eq:deflation_util}). The proof of Theorem~\ref{theo:nash_eq} is deferred to Appendix~\ref{sec:nash_proof}.
\begin{theorem}
    \label{theo:nash_eq}
    Assume that the covariance matrix $\bm{\Sigma}$ has positive and strictly decreasing eigenvalues $\lambda_1^\star > \dots > \lambda_K^\star > 0$. Then, $\{\bfu_k^\star\}_{k=1}^K$ is the unique strict Nash Equilibrium defined by the utilities in (\ref{eq:deflation_util}) up to sign perturbation, i.e., replacing $\bfu_k^\star$ with $-\bfu_k^\star$.   
\end{theorem}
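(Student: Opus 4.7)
The plan is to verify the Nash equilibrium property at $\cbrace{\bfu_k^\star}_{k=1}^K$ by direct computation in the eigenbasis of $\bm{\Sigma}$, and then establish uniqueness by induction on the player index $k$.

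First I would fix player $k$ and suppose that the first $k-1$ players play $\bfv_{k'} = \bfu_{k'}^\star$. Writing any candidate strategy as $\bfv = \sum_{j=1}^d \alpha_j \bfu_j^\star$ with $\sum_j \alpha_j^2 = 1$, one immediately gets $\bfv^\top\bm{\Sigma}\bfv = \sum_j \lambda_j^\star \alpha_j^2$, $\bfu_{k'}^{\star\top}\bm{\Sigma}\bfu_{k'}^\star = \lambda_{k'}^\star$, and $\paren{\bfu_{k'}^{\star\top}\bfv}^2 = \alpha_{k'}^2$, so the deflation utility collapses to
\[
\mathcal{V}_k\paren{\bfv \mid \cbrace{\bfu_{k'}^\star}_{k'=1}^{k-1}} \;=\; \sum_{j=k}^d \lambda_j^\star\,\alpha_j^2.
\]
Because $\lambda_k^\star$ strictly dominates every later $\lambda_j^\star$, this quantity is maximized on the unit sphere only at $\alpha_k^2 = 1$, i.e.\ at $\bfv = \pm\bfu_k^\star$. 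This both verifies that $\bfu_k^\star$ is a best response and shows that it is the unique best response up to sign, establishing the strict Nash equilibrium property.

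For uniqueness I would proceed by induction on $k$. A key preliminary observation is that $\mathcal{V}_k$ depends on each earlier $\bfv_{k'}$ only through the outer product $\bfv_{k'}\bfv_{k'}^\top$ and the scalar $\bfv_{k'}^\top\bm{\Sigma}\bfv_{k'}$, both invariant under $\bfv_{k'} \to -\bfv_{k'}$, so earlier strategies may always be identified with their negatives. The base case $k=1$ reduces to maximizing the Rayleigh quotient $\bfv^\top\bm{\Sigma}\bfv$ on the unit sphere, whose unique maximizers are $\pm\bfu_1^\star$ by the gap $\lambda_1^\star > \lambda_2^\star$; any strict Nash equilibrium therefore forces $\bfv_1 = \pm\bfu_1^\star$. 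For the inductive step, substituting $\bfv_{k'} = \pm\bfu_{k'}^\star$ for all $k' < k$ into $\mathcal{V}_k$ returns the problem already solved in the previous paragraph, forcing $\bfv_k = \pm\bfu_k^\star$ and closing the induction.

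I expect the main obstacle to be the clean bookkeeping of sign ambiguity: since each $\mathcal{V}_k$ is quadratic in $\bfv$ and in every $\bfv_{k'}$, there is a genuine $\cbrace{\pm 1}^K$ symmetry that precludes literal uniqueness and must be threaded explicitly through the ``strict'' condition in both the verification and the induction. A secondary subtlety is that the reduced objective $\sum_{j=k}^d \lambda_j^\star \alpha_j^2$ admits a unique maximizer on the sphere only if $\lambda_k^\star > \lambda_{k+1}^\star$; the stated hypothesis $\lambda_1^\star > \dots > \lambda_K^\star > 0$ should accordingly be read as requiring a strict gap also between $\lambda_K^\star$ and $\lambda_{K+1}^\star$ whenever $K < d$, since otherwise a two-dimensional eigenspace of equivalent best responses would appear and strictness would fail outright.
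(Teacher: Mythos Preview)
Your proposal is correct and follows essentially the same approach as the paper: expand in the eigenbasis to reduce $\mathcal{V}_k$ to $\sum_{j\geq k}\lambda_j^\star\alpha_j^2$, use the strict eigenvalue ordering to pin down the unique best response up to sign, and then induct on $k$ using the hierarchical structure of the utilities. Your remarks on sign invariance and the implicit need for a gap $\lambda_K^\star > \lambda_{K+1}^\star$ are valid refinements that the paper's proof glosses over.
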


\section{Convergence Guarantee for the Parallel Deflation Algorithm}
\label{sec:conv_guarantee}
\vspace{-0.2cm}
We provide a convergence guarantee for the parallel deflation algorithm in Algorithm~\ref{alg:parallel_deflation}. The pivot of the convergence analysis will be to track the dynamics of $\{\bm{\Sigma}_{k,\ell}\}_{k=1}^K$ and $\{\bfv_{k,\ell}\}_{k=1}^K$ as $\ell$ increases. The dynamics of the two sequences from Algorithm~\ref{alg:parallel_deflation} can be compactly represented as:\vspace{-0.2cm}
\[
    \bm{\Sigma}_{k,\ell} = \bm{\Sigma} - \sum_{k'=1}^{k-1}\bfv_{k',\ell-1}\bfv_{k',\ell-1}^\top\bm{\Sigma}\bfv_{k',\ell-1}\bfv_{k',\ell-1}^\top;\quad \bfv_{k,\ell} = \texttt{Top1}\paren{\bm{\Sigma}_{k,\ell},\bfv_{k,\ell-1}};\quad \forall \ell\geq k.
\]
Here, we embed the number of solver steps $T$ in the property of the abstract local solver $\texttt{Top1}(\cdot)$. Indeed, if $\texttt{Top1}(\cdot)$ returns the exact top eigenvector of the input matrix every time it is called, then we can easily see that $\bfv_{k,\ell} = \bfu_k^\star$ for all $\ell\geq k$. 
When $\texttt{Top1}(\cdot)$ returns an inexact estimate of the input matrix sequentially, i.e., worker $k$ waits until the top-$(k-1)$ worker no longer improves the estimation of the top-$(k-1)$ eigenvectors, the error is analyzed by \cite{liao2023errorpropagation}. 

Our scenario is further complicated by the continuous improvement of the eigenvector estimates used to deflate the matrix: in each communication round, the $\texttt{Top1}(\cdot)$ function, called by worker $k$, will start at the estimate of the top eigenvector of the deflated matrix in the previous round but will be fitted to the top eigenvector of the deflated matrix in the current round. 
Our convergence analysis tackles this complicated dynamic by utilizing the following abstraction of the $\texttt{Top1}(\cdot)$ sub-routine.
\begin{asump}
    \label{asump:conv_top1}
    Let $\hat{\bm{\Sigma}}\in\R^{d\times d}$ be a real symmetric matrix. Let $\lambda^\star$ be its eigenvalue with the largest absolute value, and let $\bfu^\star$ be the corresponding eigenvector of $\lambda^\star$. We assume that there exists a real value $\mathcal{F}\paren{\hat{\bm{\Sigma}}} \in (0, 1)$ that depends on $\hat{\bm{\Sigma}}$ such that for any $\bfx_0\in\R^d$, {\rm $\texttt{Top1}(\cdot)$} satisfies:\vspace{-0.2cm}
    \[
        \norm{{\normalfont\texttt{Top1}}\paren{\hat{\bm{\Sigma}}, \bfx_0} - \bfu^\star}_2 \leq \mathcal{F}\paren{\hat{\bm{\Sigma}}}\norm{\bfx_0 - \bfu^\star}_2.
    \]
\end{asump}\vspace{-0.2cm}
Assumption~\ref{asump:conv_top1} can be easily guaranteed. as long as the $\texttt{Top1}(\cdot)$ algorithm enjoys a non-asymptotic convergence to the top eigenvector; see the Related Works section above. With Assumption~\ref{asump:conv_top1}, the convergence of Algorithm~\ref{alg:parallel_deflation} is given by the following theorem.
\begin{theorem}
    \label{theo:conv_theo}
    Assume that Assumption~\ref{asump:conv_top1} holds, and let $\mathcal{F}_k = \max_{\ell\geq k}\mathcal{F}\paren{\hat{\Sigma}_{k,\ell}}$. Let $W_{-1}\paren{\cdot}$ be the Lambert-W function in the $-1$ branch\footnote{The Lambert-W function in the $-1$-branch is defined as the inverse of the function $f(x) = xe^x$ when $x \in (-\infty,-1)$.}, and define for $a>0$:
    \vspace{-0.2cm}
    \begin{align*}
        \hat{W}\paren{a} = \begin{cases}
        -W_{-1}\paren{-a} & \text{ if } a \in (0, e^{-1})\\
        1 & \text{ if } a \in [e^{-1},\infty)
    \end{cases}
    \end{align*}
    Let $\{m_k\}_{k=0}^K$ be a sequence of numbers denoting the convergence rates of recovering the $K$ eigenvectors, where $m_k = \max\left\{\mathcal{F}_k,\frac{1}{k}+\frac{k-1}{k}m_{k-1}\right\}$ and $m_0=\mathcal{F}_1$ as a dummy starting point.
    Let $\{s_k\}_{k=1}^n$ be a sequence of integers denoting the starting communication round where the $K$ eigenvectors' error recovery enters the linear convergence phase, respectively. To be more specific, let $s_1 = 1$ and for all $k\in[K-1]$ and $k'\in[k]$:\vspace{-0.2cm}
    \begin{equation}
        \label{eq:start_pt}
        s_{k+1} \geq \max\left\{\frac{\hat{W}\paren{m_{k}\log \sfrac{1}{m_{k}}}}{\log \sfrac{1}{m_{k}}}, \frac{km_{k}+1}{1-m_{k}}\right\} + \frac{\hat{W}\paren{\frac{\lambda_{k+1}^\star - \lambda_{k+2}^\star}{12k\lambda_{k'}^\star}\paren{\log \sfrac{1}{m_{k}}}^2}}{\log \sfrac{1}{m_{k'}}} + s_{k'}.
    \end{equation}\vspace{-0.2cm}
    Then, we have that the following holds for all $k\in[K]$
    \begin{equation}
        \label{eq:conv_form}
        \norm{\bfv_{k,\ell}-\bfu_k^\star}_2\leq 6\paren{\ell-s_k+2}m_k^{\ell-s_k+1};\quad\forall \ell\geq s_k - 1.
    \end{equation}
\end{theorem}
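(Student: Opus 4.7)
The natural strategy is strong induction on $k\in[K]$, establishing (\ref{eq:conv_form}) worker-by-worker. For the base case $k=1$, worker $1$ always sees the unperturbed matrix $\bm{\Sigma}_{1,\ell}=\bm{\Sigma}$, whose leading eigenvector is exactly $\bfu_1^\star$; iterating Assumption~\ref{asump:conv_top1} immediately gives $\norm{\bfv_{1,\ell}-\bfu_1^\star}_2\leq 2\mathcal{F}_1^\ell$, which fits the target form with $s_1=1$. For the inductive step $k\geq 2$, the complication is that $\texttt{Top1}$ at round $\ell$ approximates the leading eigenvector of the \emph{moving} target $\bm{\Sigma}_{k,\ell}$, not $\bfu_k^\star$ itself. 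Denoting that eigenvector by $\bfu_k^{\star,\ell}$ and applying Assumption~\ref{asump:conv_top1} with the triangle inequality gives the ``self plus drift'' decomposition
\begin{equation*}
    \norm{\bfv_{k,\ell}-\bfu_k^\star}_2 \leq \mathcal{F}_k\norm{\bfv_{k,\ell-1}-\bfu_k^\star}_2 + (1+\mathcal{F}_k)\norm{\bfu_k^{\star,\ell}-\bfu_k^\star}_2.
\end{equation*}

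To bound the drift term, I would introduce the ideal deflated matrix $\bm{\Sigma}_k^\star \defeq \bm{\Sigma}-\sum_{k'<k}\lambda_{k'}^\star\bfu_{k'}^\star\bfu_{k'}^{\star\top}$, whose leading eigenpair is exactly $(\lambda_k^\star,\bfu_k^\star)$ with eigengap $\lambda_k^\star-\lambda_{k+1}^\star$ to the rest of the spectrum. A Davis--Kahan $\sin\Theta$ bound then yields $\norm{\bfu_k^{\star,\ell}-\bfu_k^\star}_2\leq C\norm{\bm{\Sigma}_{k,\ell}-\bm{\Sigma}_k^\star}_{op}/(\lambda_k^\star-\lambda_{k+1}^\star)$ for a universal $C$, and the operator norm of the perturbation splits summand-by-summand: each $(\bfv_{k',\ell-1}^\top\bm{\Sigma}\bfv_{k',\ell-1})\bfv_{k',\ell-1}\bfv_{k',\ell-1}^\top-\lambda_{k'}^\star\bfu_{k'}^\star\bfu_{k'}^{\star\top}$ is a standard rank-one perturbation of order $\lambda_{k'}^\star\norm{\bfv_{k',\ell-1}-\bfu_{k'}^\star}_2$. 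Substituting the inductive hypothesis yields a coupled recurrence of the shape $\epsilon_{k,\ell}\leq \mathcal{F}_k\epsilon_{k,\ell-1}+A_k\max_{k'<k}(\ell-s_{k'}+1)m_{k'}^{\ell-s_{k'}}$ with $A_k$ depending only on the eigengap and the top-$(k-1)$ eigenvalues. Unrolling this recurrence and dominating the resulting geometric-arithmetic sum by $6(\ell-s_k+2)m_k^{\ell-s_k+1}$ is precisely what motivates the definition $m_k=\max\{\mathcal{F}_k,\tfrac{1}{k}+\tfrac{k-1}{k}m_{k-1}\}$: the convex combination with $1$ provides the ``one extra linear factor per deflated worker'' slack needed to absorb the prefactor $(\ell-s_{k'})$, while monotonicity $m_k\geq m_{k-1}\geq\cdots\geq m_1$ follows from that same convex-combination structure. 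The activation time $s_{k+1}$ in (\ref{eq:start_pt}) is then dictated by two requirements: \emph{(a)} the Davis--Kahan perturbation must be small compared to the eigengap $\lambda_{k+1}^\star-\lambda_{k+2}^\star$ (so that worker $k+1$'s target is well-defined and close to $\bfu_{k+1}^\star$), and \emph{(b)} the iterates must have already entered the linear-contraction regime. Both reduce to inequalities of the shape $x\,m^x\leq c$, which are inverted precisely by the $-1$-branch Lambert-W function when $c<e^{-1}$ (and are vacuously satisfied by $\hat{W}(c)=1$ otherwise), explaining the two summands appearing in $s_{k+1}$.

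The main obstacle I anticipate is the simultaneous bookkeeping across prior workers: every $k'<k$ contributes its own activation time $s_{k'}$ and rate $m_{k'}$ to the recurrence for worker $k$, and the universal prefactor $6$ in (\ref{eq:conv_form}) must be preserved through the induction rather than growing multiplicatively in $k$. Verifying that the convex-combination definition of $m_k$ is sharp enough to absorb the staggered geometric-arithmetic sums, and that the Lambert-W bound in (\ref{eq:start_pt}) yields a finite $s_{k+1}$ whenever the eigengap $\lambda_{k+1}^\star-\lambda_{k+2}^\star$ is positive, is where I expect the bulk of the technical effort. Modulo these algebraic manipulations, the overall scheme --- induction plus Davis--Kahan plus the per-round contraction of Assumption~\ref{asump:conv_top1} --- follows a fairly standard error-propagation template (cf.~\cite{liao2023errorpropagation}); the novel aspect is the moving-target nature of $\bm{\Sigma}_{k,\ell}$, which forces the drift term to enter the recurrence in every round rather than only once at the end of a sequential deflation.
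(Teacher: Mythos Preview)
Your proposal is correct and follows essentially the same blueprint as the paper: strong induction on $k$, a Davis--Kahan bound on the drift $\norm{\bfu_{k,\ell}-\bfu_k^\star}_2$ in terms of $\sum_{k'<k}\lambda_{k'}^\star\norm{\bfv_{k',\ell-1}-\bfu_{k'}^\star}_2$, the per-round contraction from Assumption~\ref{asump:conv_top1}, and the Lambert-$W$ inversion of $x\,m^x\leq c$ to determine the activation times.

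One structural difference is worth flagging. You work directly with the single quantity $\epsilon_{k,\ell}=\norm{\bfv_{k,\ell}-\bfu_k^\star}_2$ via the clean recurrence $\epsilon_{k,\ell}\leq\mathcal{F}_k\epsilon_{k,\ell-1}+(1+\mathcal{F}_k)\norm{\bfu_{k,\ell}-\bfu_k^\star}_2$; the paper instead tracks three interlocked quantities $D_{k,\ell}=\norm{\bfv_{k,\ell}-\bfu_{k,\ell}}_2$, $B_{k,\ell}=\norm{\bfu_{k,\ell}-\bfu_k^\star}_2$, $G_{k,\ell}=\epsilon_{k,\ell}$, and then constructs explicit surrogate upper-bound sequences $\hat{B}_{k,\ell},\hat{G}_{k,\ell}$ (Lemma~\ref{lem:ub_seq_conv}) together with \emph{two} staggered time indices $\hat{s}_k\leq s_k$---the first marking when the Davis--Kahan hypothesis becomes valid, the second when the surrogate $\hat{B}_{k,\ell}$ has decayed enough to be dominated by the pure geometric rate $\gamma_{k-1}=\tfrac{1}{k}+\tfrac{k-1}{k}m_{k-1}$ (Lemma~\ref{lem:Bk_evolve}). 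Your decomposition is slightly leaner (it avoids the extra $B_{k,\ell-1}$ term the paper carries), but the paper's two-stage $(\hat{s}_k,s_k)$ bookkeeping is precisely what produces the two $\hat{W}$ summands in (\ref{eq:start_pt}); when you carry out the ``bulk of the technical effort'' you anticipate, you will likely find yourself reintroducing that same split, since the Davis--Kahan precondition and the linear-factor absorption impose separate waiting-time constraints.
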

In words, Theorem~\ref{theo:conv_theo} says that starting from the $s_k$th communication round, the recovery error of the $k$th eigenvector converges according to a nearly-linear convergence rate given in (\ref{eq:conv_form}). However, the convergence starting point $s_k$ for the $k$th eigenvector must be later than the convergence starting point for the $1,\dots,k-1$th eigenvector for a number of communication rounds. This delay in the convergence starting point is characterized in (\ref{eq:start_pt}). Intuitively, the starting point $s_k$ denotes the index of the communication round where the top-$(k-1)$ eigenvectors have been estimated accurately enough for the $k$th worker to make positive progress.

\textbf{Remark 1.} By the definition that $m_k = \max\left\{\mathcal{F}_k,\frac{1}{k}+\frac{k-1}{k}m_{k-1}\right\}$, one could see that $m_k < 1$ since $\mathcal{F}_k < 1$ for all $k\in[K]$. The convergence rate in (\ref{eq:conv_form}) involves the product of a linear term $\ell - s_k + 2$ and an exponential term $m_k^{\ell-s_k+1}$. When $\ell$ is large enough, $m_k^{\ell-s_k+1}$ decays at a much faster speed than the increase of $\ell - s_k + 2$, thus giving a nearly-linear convergence rate.

\textbf{Remark 2.} Upper bound on the separation between the $s_k$'s. By using the inequality that $W_{-1}(e^{-u-1}) \geq - 1 - \sqrt{2u} - u$ \cite{Chatzigeorgiou_2013}, we could obtain that $\hat{W}(a) \leq \log \sfrac{1}{a} +\sqrt{2(\log \sfrac{1}{a} - 1)} + 1$ when $a\in(0, e^{-1})$. Therefore, we can conclude that $\hat{W} = O(\max\{1, \log\sfrac{1}{a}\})$. Notice that (\ref{eq:start_pt}) requires that $s_{k+1}$, the starting point of the linear convergence for the error of $\bfv_{k+1,\ell}$, must be later than $s_1,\dots,s_k$ for some steps. Using the bound of $\hat{W}(a)$, one could simplify (\ref{eq:start_pt}) to:
\[
    s_{k+1} \geq s_k + O\paren{\max\left\{\paren{\log\frac{1}{m_k}}^{-1}\paren{1 + \log \frac{k\lambda_{k}^\star}{\lambda_{k+1}^\star - \lambda_{k+2}^\star}}, \frac{km_{k}+1}{1-m_{k}}\right\}}.
\]
This simplification implies that a smaller $m_k$ would cause a smaller decay between $s_k$ and $s_{k+1}$. Since $m_k$ depends on $\mathcal{F}_k$, a smaller $m_k$ can be achieved by doing more local steps in the call to $\texttt{Top1}(\cdot)$. However, since the decay between $s_k$ and $s_{k+1}$ is measured in terms of the communication rounds, doing more local steps also increases the computation time per round in the delayed periods. 

\textbf{Sketch of Proof.} The key challenge in proving Theorem~\ref{theo:conv_theo} lies in handling the dynamic, asynchronous nature of our algorithm. Unlike sequential deflation, where each principal component is computed after the previous ones have converged and stays fixed, our method deals with simultaneous updates of all principal components. This requires careful analysis of how errors propagate and accumulate across different workers. To start, we derive upper bounds of the per-iteration difference between the actual deflated matrix $\bm{\Sigma}_k$ and the ideal deflated matrix $\bm{\Sigma}^*_k$ and the difference between the estimated eigenvector $\bfv_{k,\ell}$ and the ground-truth eigenvector $\bfu_k^*$. Notice that these two upper bounds are inter-dependent. We apply Davis-Kahan $\text{Sin}\Theta$ Theorem \cite{davis1970rotation} to derive the bound between the matrices' top-eigenvectors based on the matrix differences. Next, we carefully choose a convergence starting point $s_k$ for each eigenvector. Construct two simpler two-dimensional sequences $\{B_{k,\ell}\}$ and $\{G_{k,\ell}\}$ starting from $s_k$'s that upper bound these differences. Lastly, we unroll the bounds on $\{B_{k,\ell}\}$ and $\{G_{k,\ell}\}$ to arrive at a closed form upper bound on error of the estimated eigenvector $\bfv_{k,\ell}$. The detailed proof is deferred to Appendix~\ref{sec:conv_proof}.

\section{Experiments}
\vspace{-0.2cm}
\begin{figure}
    \centering
    \begin{subfigure}{0.32\textwidth}
        \includegraphics[width=\linewidth]{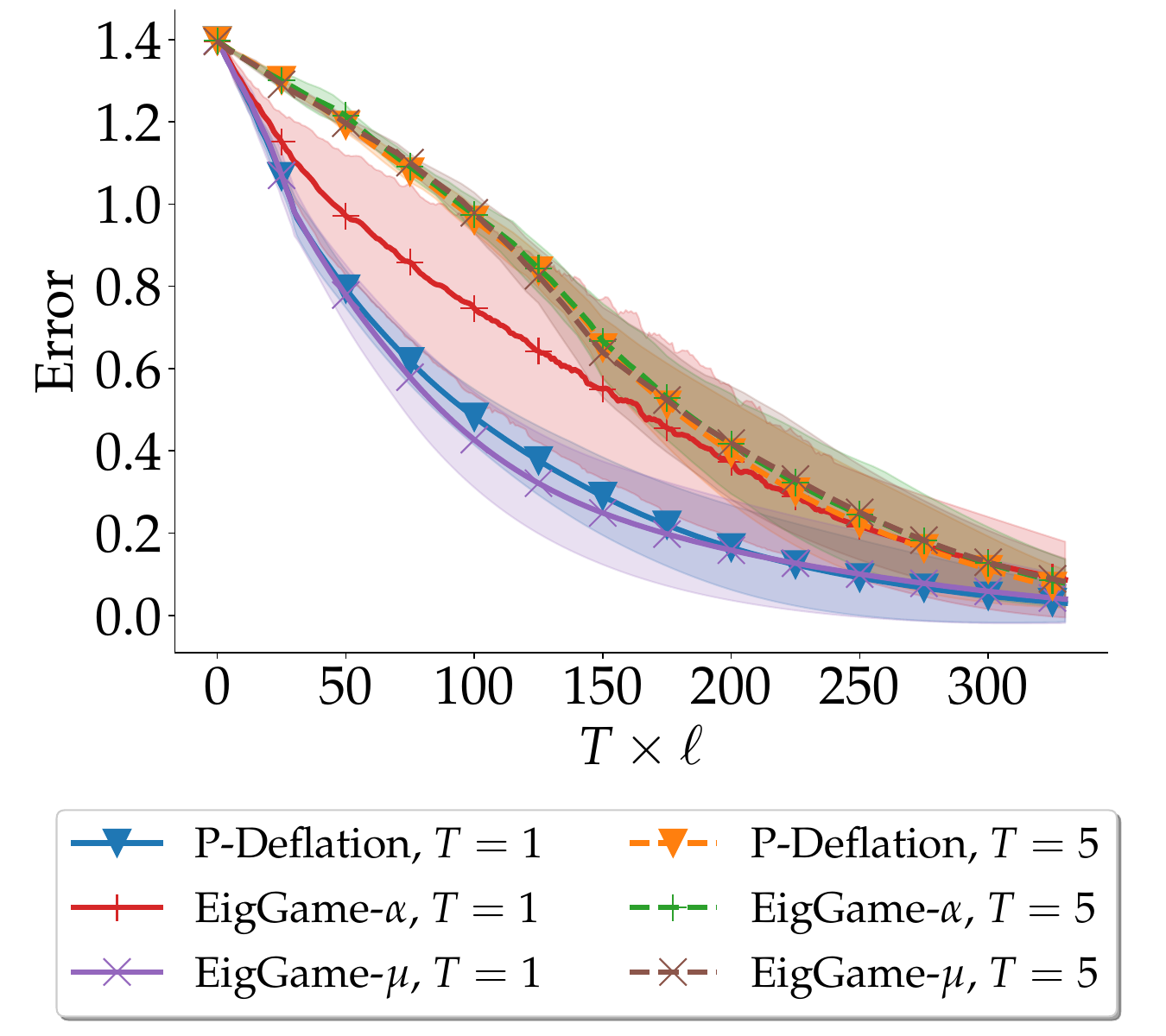}
        \caption{}
        \label{fig:powerlaw}
    \end{subfigure}
    \begin{subfigure}{0.32\textwidth}
        \includegraphics[width=\linewidth]{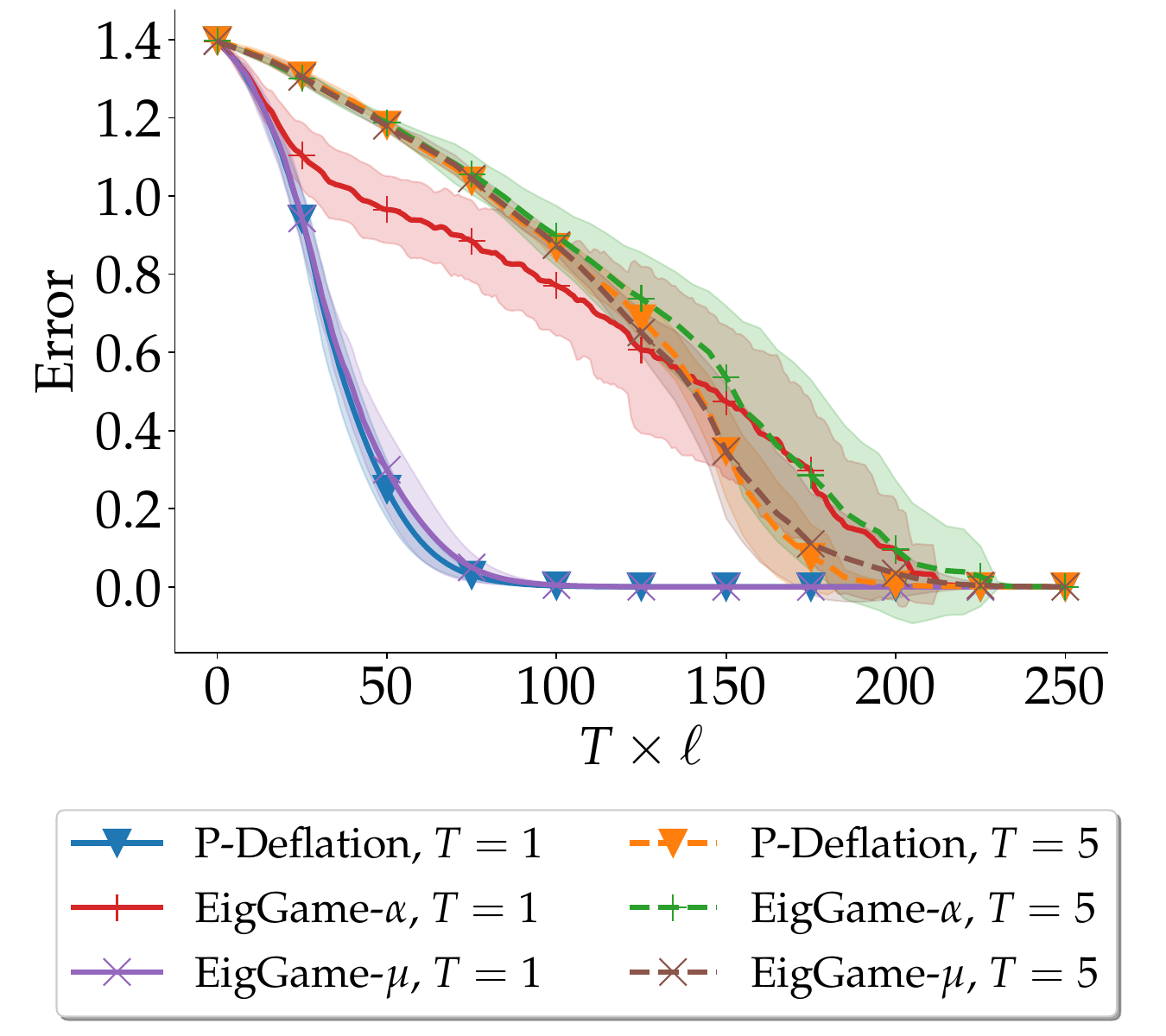}
        \caption{}
        \label{fig:expdecay}
    \end{subfigure}
    \begin{subfigure}{0.32\textwidth}
        \includegraphics[width=\linewidth]{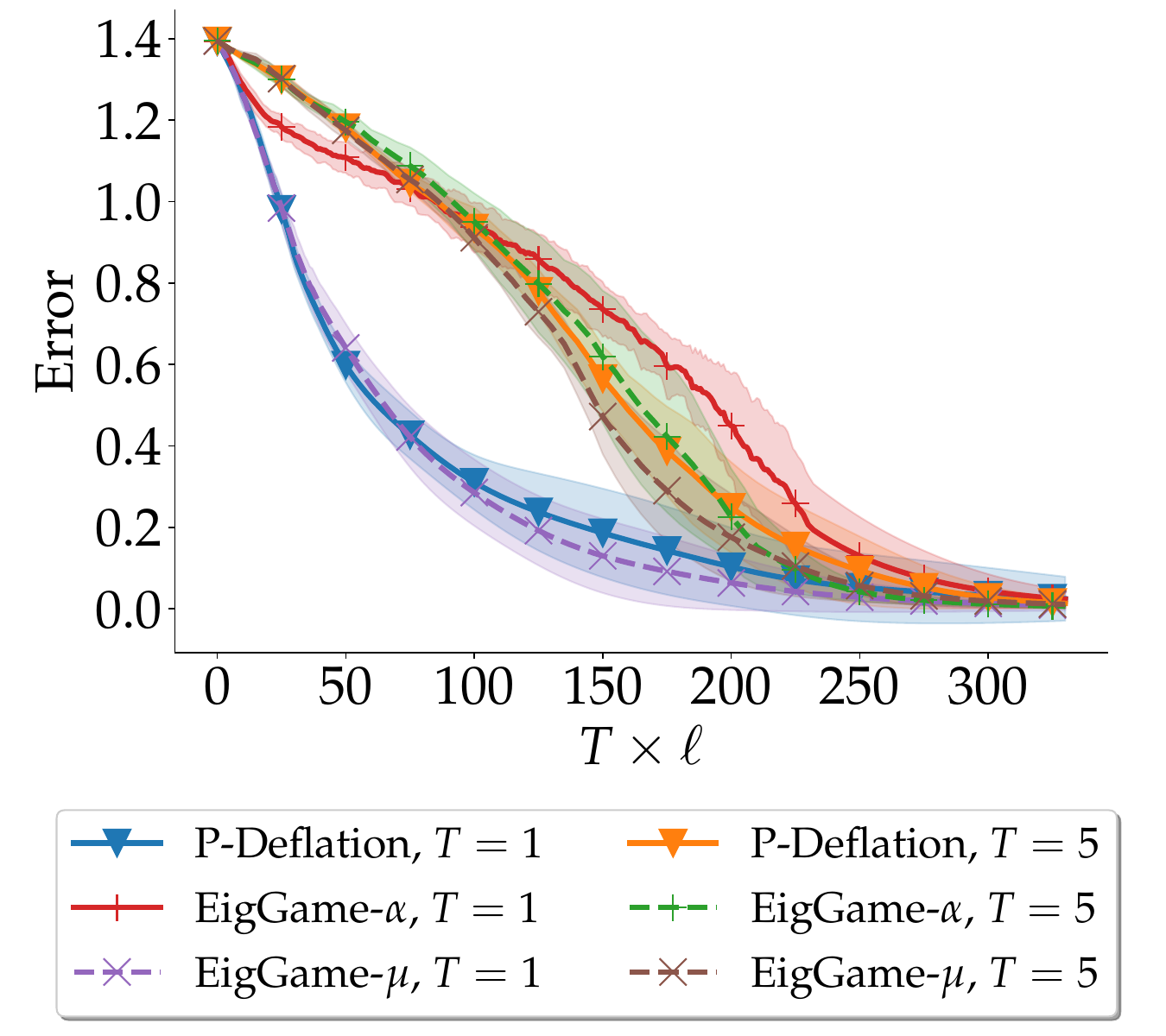}
        \caption{}
        \label{fig:mnist}
    \end{subfigure}
    \caption{Comparison of the convergence behavior of parallel deflation, EigenGame-$\alpha$, and EigenGame-$\mu$ in deterministic setting on (a). synthetic dataset with power-law decaying eigenvalues, (b). synthetic dataset with exponentially decaying eigenvalues, and (c). MNIST dataset.}
    \label{fig:det_exp}
    \vspace{-0.5cm}
\end{figure}
In this section, we experimentally verify the performance of the parallel deflation algorithm.

\textbf{Baseline algorithms.}
We compared the parallel deflation algorithm with power iteration as the $\texttt{Top1}$ subroutine with the distributed version of EigenGame-$\alpha$ \cite{gemp_2020_eigengame} and EigenGame-$\mu$ \cite{gemp2022eigengame}. 
EigenGame-$\alpha$ was proposed as a sequential principal component recovery algorithm and can be adapted as a distributed algorithm. 
Both EigenGame-$\alpha$ and EigenGame-$\mu$ are restricted to the case of one iteration of update per communication round. 
We modified their algorithm to generalize to multiple iterations of update in Algorithm~\ref{alg:eiggame_alpha_det} and Algorithm~\ref{alg:eiggame_mu_det} in Appendix~\ref{sec:baseline}. As in the implementation of \cite{gemp_2020_eigengame} and \cite{gemp2022eigengame}, we do not project the utility gradient to the unit sphere.

\textbf{Evaluation Metric.} 
We evaluate the performance of the three algorithms by computing how close the recovered principal component is to the true eigenvector of the covariance matrix. For the set of true principal components $\{\bfu_k^\star\}_{k=1}^K$ and a set of recovered principal component $\{\bfv_k\}_{k=1}^K$, we use the following metric to compute the approximation error\footnote{We choose this metric instead of the Longest Correct Eigenvector Streak \cite{gemp_2020_eigengame} to study the precise dynamic of the recovery error as the number of computations steps increases.}
\vspace{-0.3cm}
\begin{equation}
    \label{eq:metric1}
    \mathcal{E}\paren{\{\bfu_k^\star\}_{k=1}^K, \{\bfv_k\}_{k=1}^K} = \paren{\frac{1}{K}\sum_{k=1}^K\min_{s\in\{\pm 1\}}\norm{\bfu_k^\star - s\cdot\bfv_k}_2^2}^{\frac{1}{2}}
\end{equation}
\vspace{-0.5cm}

\textbf{Deterministic Experiments.}
For synthetic experiments, we choose the number of features $d = 1000$, which gives the covariance matrices $\bm{\Sigma}\in\R^{1000\times 1000}$. We consider $\bm{\Sigma}$ generated with two different eigenvalue spectra: 
$i)$ a power-law decaying spectrum $\lambda_k^\star = \frac{1}{\sqrt{k}}$, and 
$ii)$ an exponential decaying spectrum $\lambda_k^\star = \frac{1}{1.1^k}$. We choose the number of local updates in each communication round $T$ to be $T \in\{1, 5\}$. We ran parallel deflation, EigenGame-$\alpha$, and EigenGame-$\mu$ to recover the top-$30$ eigenvectors ($K=30$). For each setting, we run 10 trials with different random initialization.

Figure~\ref{fig:powerlaw} presents the convergence behavior of the three algorithms with $T\in\{1, 5\}$ on the synthetic matrix with $\lambda_k^\star = \frac{1}{\sqrt{k}}$. Both EigenGame-$\mu$ and parallel deflation demonstrate stable convergence to a low error value under the case of $T = 1$ and $T=5$, with parallel deflation converging slightly slower than EigenGame-$\mu$ in the first 200 total steps, and then arriving at a lower error than EigenGame-$\mu$ in the last 100 total steps. Figure~\ref{fig:expdecay} presents the result on the synthetic matrix with $\lambda_k^\star = \frac{1}{1.1^k}$, with similar conclusions. In both Figure~\ref{fig:powerlaw} and Figure~\ref{fig:expdecay}, the setting of $T=1$ shows a faster convergence than $T=5$. This is because $T=1$ allows more communication in a fixed number of total steps $T\times L$, which keeps the deflated matrices of each local worker to be better updated.

We also use the real-world dataset of MNIST, $\mathbf{Y} \in \mathbb{R}^{60000 \times 784}$ in Figure~\ref{fig:mnist}. We choose $T\in\{1, 5\}$ and aim at recovering the top-$30$ eigenvectors. We observe a similar convergence behavior of the three algorithms as above. For EigenGame-$\alpha$, the case $T=1$ converges even slower than the case $T=5$. We hypothesize that this is because one local iteration is not sufficient for the top eigenvector solvers to provide an accurate enough estimate for the following solvers to make positive progress.

\begin{figure}
    \centering
    \begin{subfigure}{0.32\textwidth}
        \includegraphics[width=\linewidth]{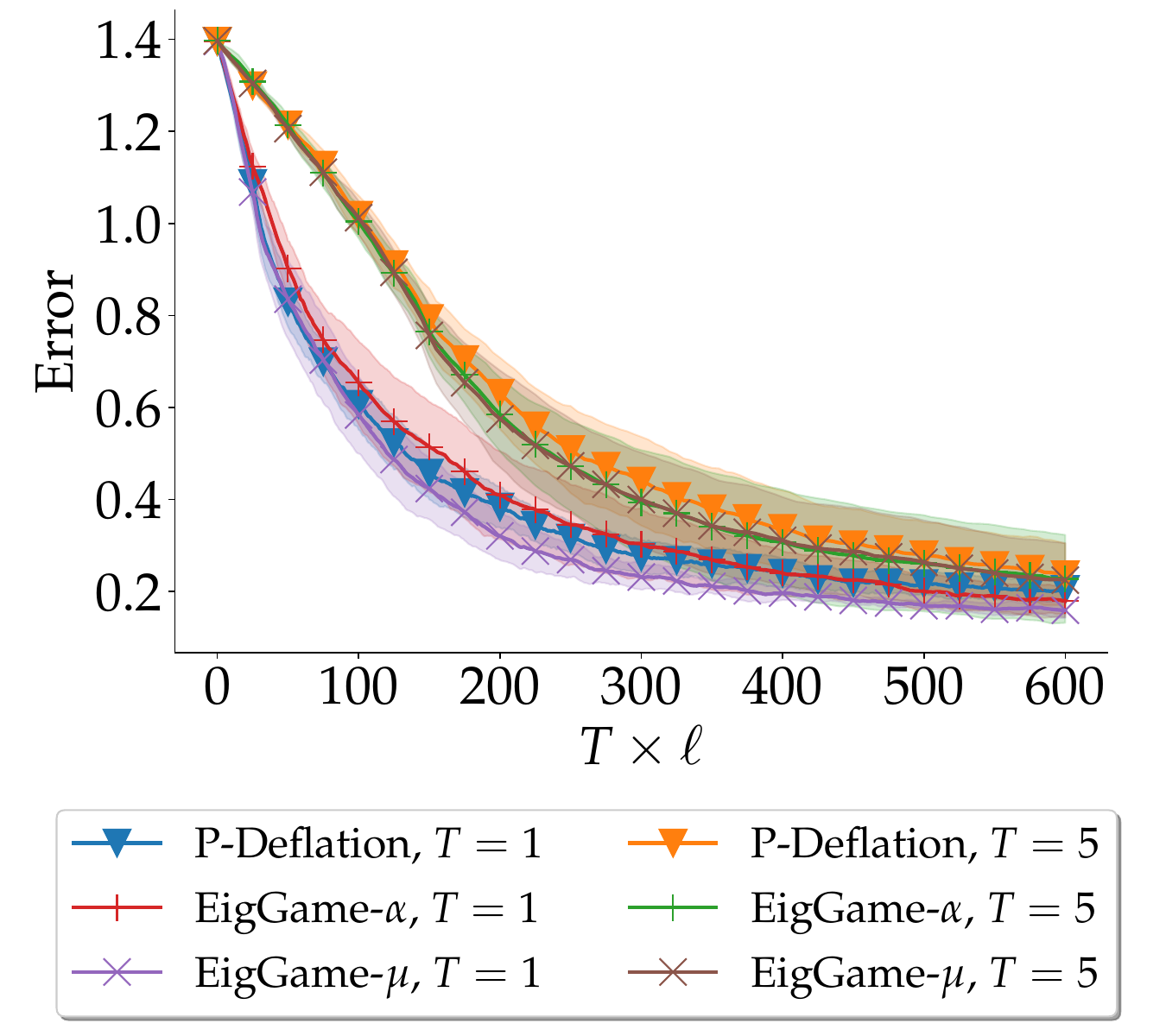}
        \caption{}
        \label{fig:sto_synthetic}
    \end{subfigure}
    \hspace{0.3cm}
    \begin{subfigure}{0.32\textwidth}
        \includegraphics[width=\linewidth]{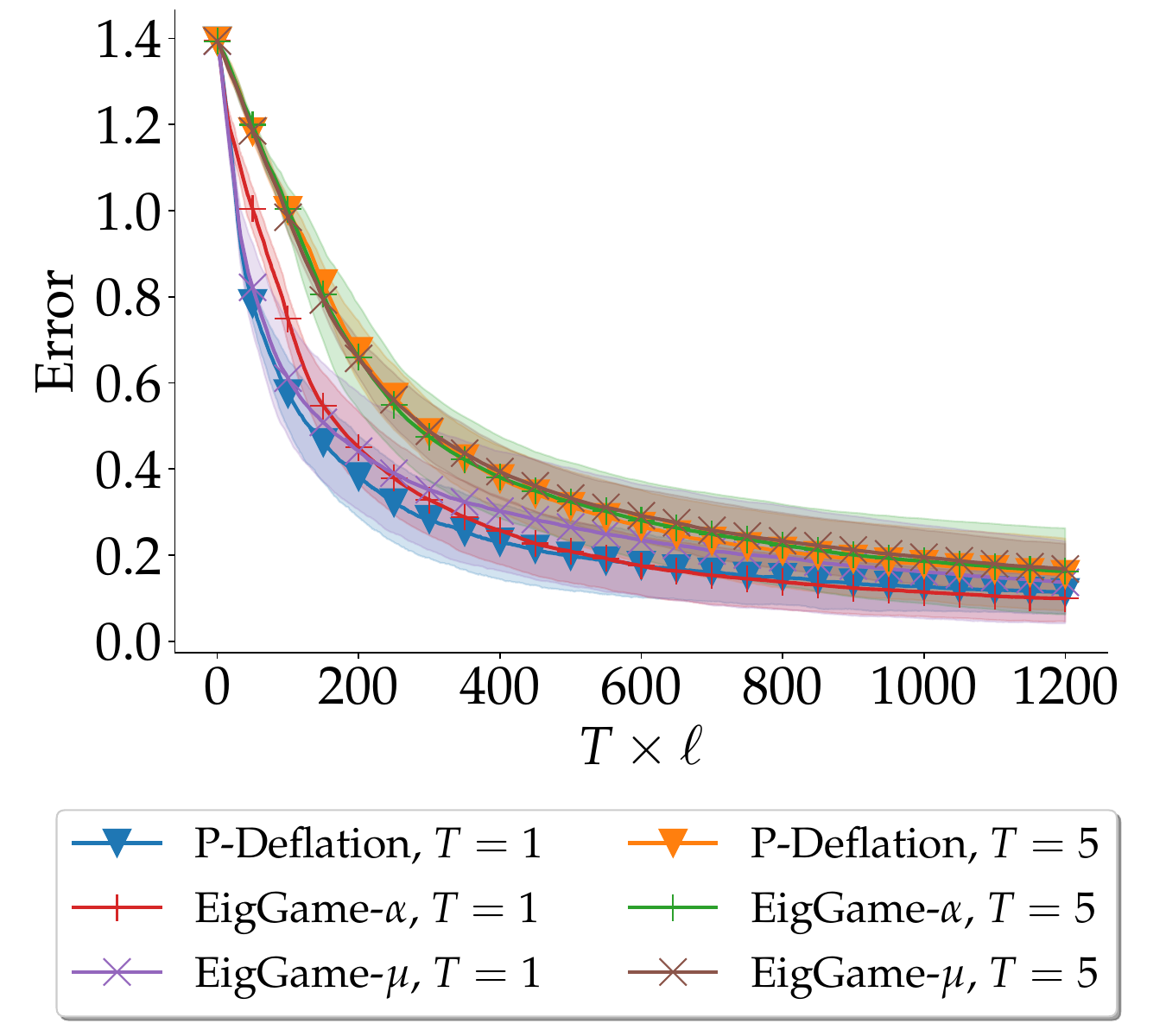}
        \caption{}
        \label{fig:sto_mnist}
    \end{subfigure}
    \begin{subfigure}{0.32\textwidth}
        \includegraphics[width=\linewidth]{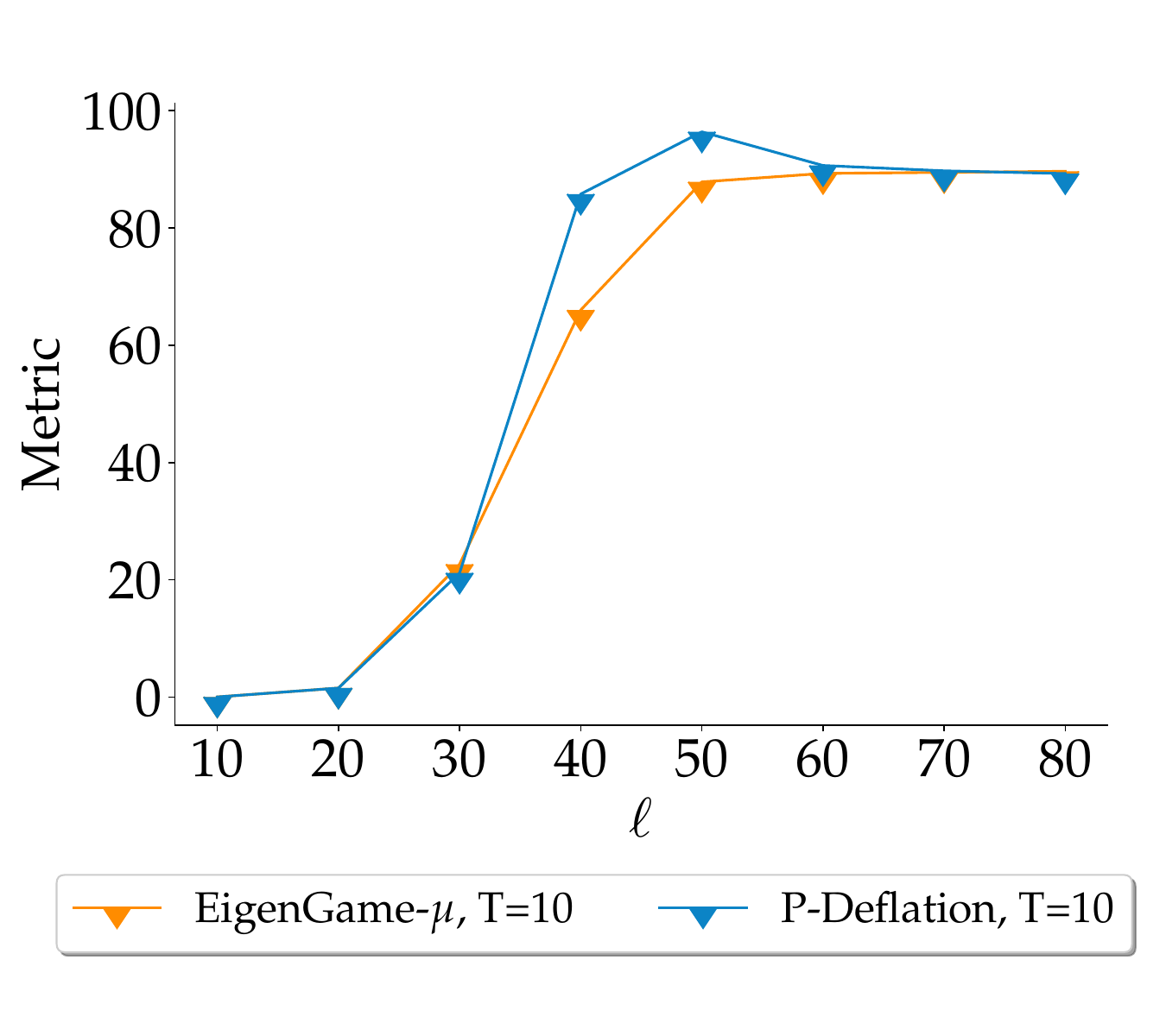}
        \caption{}
        \label{fig:imagenet}
    \end{subfigure}
    \caption{Comparison of the convergence behavior of parallel deflation, EigenGame-$\alpha$, and EigenGame-$\mu$ in stochastic setting on (a). synthetic dataset with power-law decaying eigenvalues, (b). MNIST dataset, and (c) ImageNet dataset.}
\end{figure}

\textbf{Stochastic Setting.} 
We generate $\bm{\Sigma}$ with power-law decaying spectrum as in the deterministic experiments. We sample I.I.D. samples from $\mathcal{N}\paren{\bm{0},\bm{\Sigma}}$ and pass the sampled data batches to parallel deflation and EigenGame in a streaming fashion. We use a decaying step size for all three algorithms, and the result is given in Figure~\ref{fig:sto_synthetic}. In this setting, parallel deflation shows a slightly worse performance than the two EigenGame algorithms. We hypothesize that this is because parallel deflation is more sensitive to the step size tuning in the stochastic case. 
In Figure \ref{fig:sto_mnist}, we plot the performance of parallel deflation and EigenGame in the stochastic setting of the MNIST dataset. Parallel deflation achieves similar performance to EigenGame-$\mu$, with a slightly faster convergence speed in the early phase of the algorithm. 

We also test the performance of parallel deflation on the ImageNet dataset \cite{imagenet2009deng} that contains $n = 1.2$M images of $d = 50176$ pixels. Due to the size of $\mathbf{Y}$, it was not possible in our set up to compute or store the covariance matrix on a single device. We use both parallel deflation and EigenGame-$\mu$ to compute the top-10 eigenvector of the dataset. Since no "ground-truth" principal component is known, we can use an aggregation of the terms $\bfv_k^\top\bm{\Sigma}\bfv_k$ as a metric to evaluate the quality of the solved principal components. To follow the internal hierarchy of the eigenvectors that the leading eigenvectors are free to explore more space and thus are expected to attain a larger $\bfv\top\bm{\Sigma}\bfv$, we penalize terms with larger index with a discounting factor. This result in the following metric:
\begin{equation}
    \label{eq:metric2}
    \mathcal{M}\paren{\left\{\bfv_k\right\}_{k=1}^K} = \sum_{k=1}^{K}\frac{1}{k}\bfv_k^\top\bm{\Sigma}\bfv_k =\frac{1}{n}\sum_{i=1}^n\sum_{k=1}^K\frac{\paren{\bar{\bfy}_i^\top\bfv_k}^2}{k}
\end{equation}
Notice that a larger value of $\mathcal{M}\paren{\cdot}$ implies a higher quality of the recovered eigenvectors. Figure~\ref{fig:imagenet} shows that in this large scale dataset, our algorithm can keep up with the performance of the state-of-the-art algorithm EigenGame-$\mu$.

\vspace{-0.2cm}
\section{Conclusion}

We present an algorithmic framework for computing the principal components in a distributed fashion. We introduce additional parallelism by early-starting the computation of the following eigenvectors based on the initial rough estimation of leading principal components and continuously refining the local deflated matrix based on updated estimated principal components. Our framework has a similar game-theoretic formulation as the EigenGame, while enjoying a nice convergence guarantee even in the distributed case. Future work can focus on empirically examining the potential of using other $\texttt{Top1}$ subroutines in our parallel deflation algorithm, such as Oja's rule.

% Reference
% For natbib users:
\bibliography{reference}

%%%%%%%%%%%%%%%%%%%%%%%%%%%%%%%%%%%%%%%%%%%%%%%%%%%%%%%%%%%%

\appendix
\section{Appendix}

\section{Missing Proof from Section~\ref{sec:algorithm}}
\label{sec:nash_proof}
\begin{proof}[Proof of Theorem~\ref{theo:nash_eq}]
    Let $\bfu_1^\star,\dots,\bfu_d^\star$ be the set of eigenvectors of $\bm{\Sigma}$, and $\lambda_1^\star,\dots,\lambda_d^\star$ be the corresponding eigenvalues, potentially with some $\lambda_k = 0$. Recall that in the game formulation of the deflation algorithm, the utility function of the $k$th player is given by
    \[
        \mathcal{V}_k\paren{\bfv\mid \left\{\bfv_{k'}\right\}_{k'=1}^{k-1}} = \bfv^\top\bm{\Sigma}\bfv - \sum_{k'=1}^{k-1}\bfv_{k'}^\top\bm{\Sigma}\bfv_{k'}\cdot\paren{\bfv_{k'}^\top\bfv}^2
    \]
    and when $\bfv_{k'} = \bfu_{k'}^\star$ for $k'\in[k-1]$, we have
    \[
        \mathcal{V}_k\paren{\bfv\mid \left\{\bfu_{k'}^\star\right\}_{k'=1}^{k-1}} = \bfv^\top\bm{\Sigma}\bfv - \sum_{k'=1}^{k-1}\lambda_{k'}^\star\cdot\paren{\bfv_{k'}^\top\bfv}^2
    \]
    It should be noted that $\bm{\Sigma}$ has the eigendecomposition $\bm{\Sigma} = \sum_{k'=1}^d\lambda_{k'}^\star\bfu_{k'}^\star\bfu_{k'}^{\star\top}$. Therefore we can rewrite $\bfv^\top\bm{\Sigma}\bfv$ as $\sum_{k'=1}^d\lambda_{k'}^\star\paren{\bfv^\top\bfu_{k'}^\star}^2$.  Thus, the utility becomes
    \[
        \mathcal{V}_k\paren{\bfv\mid \left\{\bfu_{k'}^\star\right\}_{k'=1}^{d}} = \sum_{k'=k}^{k-1}\lambda_{k'}^\star\paren{\bfv^\top\bfu_{k'}^\star}^2
    \]
    Since $\bfu_1^\star,\dots,\bfu_d^\star$ spans $\R^d$ and are mutually orthogonal, we can write $\bfv = \sum_{j=1}^d\beta_j\bfu_j^\star$. where $\sum_{j=1}^d\beta_j^2 =1$. Then we have
    \[
        \mathcal{V}_k\paren{\bfv\mid \left\{\bfu_{k'}^\star\right\}_{k'=1}^{d}} = \sum_{k'=k}^{k-1}\lambda_{k'}^\star\paren{\sum_{j=1}^d\beta_j\bfu_j^{\star\top}\bfu_{k'}^\star}^2 = \sum_{k'=k}^{k-1}\lambda_{k'}^\star\beta_{k'}^2
    \]
    Since $\lambda_{k}$'s are strictly decreasing and positive, we must have that the maximum of $\mathcal{V}_k\paren{\bfv\mid \left\{\bfu_{k'}^\star\right\}_{k'=1}^{d}}$ is only attained when $\beta_{k}^2 = 1$, which implies that $\bfv = \pm\bfu_k^\star$ will be the only optimal policies for player $k$.
    
    The uniqueness can be shown by induction. To start, we notice that $\mathcal{V}_1$ does not depend on the policy of the other plays. Therefore, the only optimal policy for player 1 is $\bfv_1 = \pm\bfu_1^\star$. This shows the base case. Now, assume that within the top-$(k-1)$ players, the optimal policies are $\bfv_{k'} = \pm\bfu_{k'}^\star$ for $k'\in[k-1]$. By the formulation of the utility functions, these optimal policies are not affected by the policy of player $k,\dots, K$. Moreover, it should be noted that the utility of player $k$ only depends on the top-$(k-1)$ players. Therefore, the optimal policy for player $k$ must be $\bfv_k = \pm\bfu_k^\star$. This finishes the inductive step and completes the proof.
\end{proof}
\section{Missing Proof from Section~\ref{sec:conv_guarantee}}
\label{sec:conv_proof}
We first introduce a tool that we will utilize in the proof in this section.
\begin{lemma}[$\sin\Theta$ Theorem \cite{davis1970rotation}]
    \label{lem:davis-kahan}
    Let $\bfM^*\in\R^{d\times d}$ and let $\bfM = \bfM^* + \bfH$. Let $\bfa_1^*$ and $\bfa_1$ be the top eigenvectors of $\bfM^*$ and $\bfM$, respectively. Then we have:
    \[
        \sin\angle\left\{\bfa_1^*,\bfa_1\right\} \leq \frac{\norm{\bfH}_2}{\min_{j\neq k}\left|\sigma_k^* - \sigma_j\right|}.
    \]
\end{lemma}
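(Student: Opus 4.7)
The plan is to follow the standard Davis--Kahan argument by expanding the top eigenvector of one matrix in the eigenbasis of the other and exploiting the eigen-relation, which converts the spectral gap into pointwise control on the expansion coefficients. First I would fix an orthonormal eigenbasis $\{\bfa_j\}_{j=1}^d$ of $\bfM$ with eigenvalues $\{\sigma_j\}$ (using symmetry of $\bfM$, which is implicit in the statement), and expand $\bfa_1^\star = \sum_{j=1}^d \beta_j \bfa_j$ with $\sum_j \beta_j^2 = 1$. Since both $\bfa_1^\star$ and $\bfa_1$ are unit vectors, $|\bfa_1^\top \bfa_1^\star| = |\beta_1|$ and therefore $\sin^2\angle\{\bfa_1^\star,\bfa_1\} = 1 - \beta_1^2 = \sum_{j\neq 1}\beta_j^2$, so it suffices to bound this tail sum.

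Next I would use the eigen-identity $\bfM^\star \bfa_1^\star = \sigma_1^\star \bfa_1^\star$ together with the decomposition $\bfM^\star = \bfM - \bfH$ to obtain $(\sigma_1^\star I - \bfM)\bfa_1^\star = -\bfH \bfa_1^\star$. Taking the inner product of both sides with $\bfa_j$ for $j \neq 1$ and using $\bfa_j^\top \bfM = \sigma_j \bfa_j^\top$ gives the pointwise identity $(\sigma_1^\star - \sigma_j)\,\beta_j = -\bfa_j^\top \bfH \bfa_1^\star$. Squaring and summing over $j \neq 1$, together with Parseval on $\{\bfa_j\}$, yields
\[
\sum_{j \neq 1}(\sigma_1^\star - \sigma_j)^2 \beta_j^2 \;=\; \sum_{j \neq 1}\bigl(\bfa_j^\top \bfH \bfa_1^\star\bigr)^2 \;\leq\; \norm{\bfH \bfa_1^\star}_2^2 \;\leq\; \norm{\bfH}_2^2,
\]
where the last step uses that $\bfa_1^\star$ is a unit vector and the operator-norm definition.

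Finally, factoring out $\delta := \min_{j\neq 1}|\sigma_1^\star - \sigma_j|$ on the left gives $\delta^2 \sum_{j\neq 1}\beta_j^2 \leq \norm{\bfH}_2^2$, i.e., $\sin\angle\{\bfa_1^\star,\bfa_1\} \leq \norm{\bfH}_2/\delta$, which is the claimed inequality with $k=1$ in the statement's notation.

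The main obstacle is a notational/interpretive one rather than a technical one: the written bound denotes the eigengap as $\min_{j\neq k}|\sigma_k^\star - \sigma_j|$, leaving it ambiguous whether $\sigma_j$ refers to eigenvalues of $\bfM$ or $\bfM^\star$. The derivation above produces the ``mixed'' gap $|\sigma_1^\star - \sigma_j|$ with $\sigma_j$ eigenvalues of $\bfM$; to recover the alternative pure form $\min_{j\neq 1}|\sigma_1^\star - \sigma_j^\star|$, one combines the above with Weyl's inequality $|\sigma_j - \sigma_j^\star| \leq \norm{\bfH}_2$ (absorbing a harmless constant), while the symmetric form $\min_{j\neq 1}|\sigma_1 - \sigma_j^\star|$ is obtained by swapping the roles of $\bfM$ and $\bfM^\star$ and running the identical argument. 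In either interpretation, no further analytic work is required beyond the three lines above; the only implicit assumption is that the denominator is strictly positive (otherwise the bound is vacuous).
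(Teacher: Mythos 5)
Your proposal is correct, and there is nothing in the paper to compare it against: the paper states this lemma as an external tool, citing \cite{davis1970rotation}, and never proves it. Your self-contained argument is the standard one and is sound: expanding $\bfa_1^\star$ in an orthonormal eigenbasis of $\bfM$, using $\paren{\sigma_1^\star \bfI - \bfM}\bfa_1^\star = -\bfH\bfa_1^\star$ to get $(\sigma_1^\star-\sigma_j)\beta_j = -\bfa_j^\top\bfH\bfa_1^\star$, and applying Parseval gives exactly $\sin\angle\{\bfa_1^\star,\bfa_1\}\leq \norm{\bfH}_2/\min_{j\neq 1}\abs{\sigma_1^\star-\sigma_j}$. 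Your reading of the ambiguous denominator is also the right one for this paper: the ``mixed'' gap (eigenvalue of $\bfM^\star$ against the spectrum of $\bfM$) is precisely the form invoked later in the proof of Lemma~\ref{lem:davis-kahan-app}, where the denominator is $\lambda_k^\star - \lambda_{k+1,\ell}$ before being converted to $\lambda_k^\star-\lambda_{k+1}^\star$ via the perturbation bound. The only point worth making explicit is the symmetry of $\bfM^\star$ and $\bfM$ (hence $\bfH$), which both you and the paper leave implicit but which the argument genuinely needs for the orthonormal eigenbasis and the identity $\bfa_j^\top\bfM = \sigma_j\bfa_j^\top$.
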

\subsection{Proof of Theorem~\ref{theo:conv_theo}}
Define $\bm{\Sigma}_k^\star = \sum_{k=1}^{d}\lambda_k^\star\bfu_k^\star\bfu_k^{\top\star}$ as the "ground-truth" deflation matrix. Recall that the parallel deflation algorithm executes
\begin{equation}
    \label{eq:parallel_deflation}
    \bm{\Sigma}_{k,\ell} = \bm{\Sigma} - \sum_{k'=1}^{k-1}\bfv_{k',\ell-1}\bfv_{k',\ell-1}^\top\bm{\Sigma}\bfv_{k',\ell-1}\bfv_{k',\ell-1}^\top;\quad \bfv_{k,\ell} = \texttt{Top1}\paren{\bm{\Sigma}_{k,\ell},\bfv_{k,\ell-1}}
\end{equation}
Let $\bfu_{k,\ell}$ denote the top eigenvector of $\bm{\Sigma}_{k,\ell}$. In particular, it suffices to show that the quantity $\norm{\bfv_{k,\ell} - \bfu_k^\star}_2^2$ decreases as $\ell$ increases. Combining Assumption~\ref{asump:conv_top1} and the definition that $\calF_k = \max_{\ell\geq k}\calF\paren{\bm{\Sigma}_{k,\ell}}$,we have that
\begin{equation}
    \label{eq:theo2.1}
    \norm{\bfv_{k,\ell} - \bfu_{k,\ell}}_2 \leq \mathcal{F}_k\norm{\bfv_{k,\ell-1} - \bfu_{k,\ell}}_2
\end{equation}
We could upper bound $\norm{\bfv_{k,\ell-1} - \bfu_{k,\ell}}_2$ using
\begin{align*}
    \norm{\bfv_{k,\ell-1} - \bfu_{k,\ell}}_2 & \leq \norm{\bfv_{k,\ell-1} - \bfu_{k,\ell-1}}_2 + \norm{\bfu_{k,\ell} - \bfu_{k,\ell-1}}_2\\
    & \leq \norm{\bfv_{k,\ell-1} - \bfu_{k,\ell-1}}_2 +  \norm{\bfu_{k,\ell} - \bfu_k^\star}_2 + \norm{\bfu_{k,\ell-1} - \bfu_k^\star}_2
\end{align*}
Combining this upper bound with (\ref{eq:theo2.1}) gives
\begin{equation}
    \label{eq:rec_cond_1_raw}
    \norm{\bfv_{k,\ell} - \bfu_{k,\ell}}_2 \leq \mathcal{F}_k\paren{\norm{\bfv_{k,\ell-1} - \bfu_{k,\ell-1}}_2 +  \norm{\bfu_{k,\ell} - \bfu_k^\star}_2 + \norm{\bfu_{k,\ell-1} - \bfu_k^\star}_2}
\end{equation}
Moreover, the triangle inequality implies that
\begin{equation}
    \label{eq:rec_cond_2_raw}
    \norm{\bfv_{k,\ell} - \bfu_{k}^\star}_2 \leq \norm{\bfv_{k,\ell} - \bfu_{k,\ell}}_2 + \norm{\bfu_{k,\ell} - \bfu_k^\star}_2
\end{equation}
Now, (\ref{eq:rec_cond_1_raw}) and (\ref{eq:rec_cond_2_raw}) give a pretty good characterization of the propagation of the errors. It remains to characterize $\norm{\bfu_{k,\ell} - \bfu_k^\star}_2$ for each $\ell$, and then we can dive into solving the recurrence. A naive bound would be that $\norm{\bfu_{k,\ell} - \bfu_k^\star}_2 \leq 2$, as $\norm{\bfu_{k,\ell}}_2 = \norm{\bfu_k^\star}_2 = 1$. However, notice that $\bfu_{k,\ell}$ is the top eigenvector of $\bm{\Sigma}_{k,\ell}$ and $\bm{\Sigma}_k^\star$, respective. Thus, we can invoke the Davis-Kahan Theorem to obtain a tighter bound. This property is given by Lemma~\ref{lem:davis-kahan-app}, whose proof is deferred to Appendix~\ref{sec:proof_davis-kahan-app}.
\begin{lemma}
    \label{lem:davis-kahan-app}
    Assume that $1 = \lambda_1^\star > \lambda_2^\star > \dots$. If the following inequality holds for some $c_0 > 1$
    \begin{equation}
        \label{eq:davis-kahan-app-cond}
        \sum_{k'=1}^{k-1}\lambda_{k'}^\star\norm{\bfu_{k'}^\star-\bfv_{k',\ell-1}}_2 \leq \frac{c_0-1}{4c_0}\paren{\lambda_k^\star - \lambda_{k+1}^\star}
    \end{equation}
    then we have that
    \begin{equation}
        \label{eq:davis-kahan-app-result}
        \norm{\bfu_{k,\ell} - \bfu_k^\star}_2 \leq \frac{4c_0}{\lambda_k^\star - \lambda_{k+1}^\star}\sum_{k'=1}^{k-1}\lambda_{k'}^\star\norm{\bfu_{k'}^\star-\bfv_{k',\ell-1}}_2
    \end{equation}
\end{lemma}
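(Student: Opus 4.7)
The lemma is a perturbation-theoretic statement about the top eigenvector of $\bm{\Sigma}_{k,\ell}$, so the natural plan is to write $\bm{\Sigma}_{k,\ell}$ as a perturbation of the ``ideal'' deflated matrix $\bm{\Sigma}_k^\star := \sum_{k'=k}^d \lambda_{k'}^\star \bfu_{k'}^\star\bfu_{k'}^{\star\top}$, bound the perturbation norm, check that the induced shift in the spectrum leaves the spectral gap of $\bm{\Sigma}_{k,\ell}$ positive, and finally apply the $\sin\Theta$ theorem supplied as Lemma~\ref{lem:davis-kahan}. The top eigenpair of $\bm{\Sigma}_k^\star$ is exactly $(\lambda_k^\star,\bfu_k^\star)$, its second eigenvalue equals $\lambda_{k+1}^\star$, and its top-to-second spectral gap is $\lambda_k^\star-\lambda_{k+1}^\star$, which is precisely the denominator in the conclusion.

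The perturbation takes the form $\bfH_{k,\ell} := \bm{\Sigma}_{k,\ell} - \bm{\Sigma}_k^\star = \sum_{k'=1}^{k-1}\bigl(\lambda_{k'}^\star\bfu_{k'}^\star\bfu_{k'}^{\star\top} - \hat{\lambda}_{k',\ell}\bfv_{k',\ell-1}\bfv_{k',\ell-1}^\top\bigr)$ with $\hat{\lambda}_{k',\ell}:=\bfv_{k',\ell-1}^\top\bm{\Sigma}\bfv_{k',\ell-1}$. The technical crux of the proof is a per-term spectral bound on each summand. Writing $\bfP_{k'}^\star := \bfu_{k'}^\star\bfu_{k'}^{\star\top}$ and $\bfP_{k',\ell} := \bfv_{k',\ell-1}\bfv_{k',\ell-1}^\top$, I would decompose
\[
\lambda_{k'}^\star\bfP_{k'}^\star - \hat{\lambda}_{k',\ell}\bfP_{k',\ell} \;=\; \lambda_{k'}^\star(\bfP_{k'}^\star - \bfP_{k',\ell}) \;+\; (\lambda_{k'}^\star - \hat{\lambda}_{k',\ell})\bfP_{k',\ell}.
\]
The first (``projection difference'') piece has spectral norm equal to $\lambda_{k'}^\star\sin\theta_{k'} \leq \lambda_{k'}^\star\|\bfu_{k'}^\star - \bfv_{k',\ell-1}\|_2$, using the standard identity that the spectral norm of the difference of two unit rank-one projectors is the sine of the angle between their defining vectors. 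The second (``Rayleigh quotient mismatch'') piece has spectral norm exactly $|\lambda_{k'}^\star - \hat{\lambda}_{k',\ell}|$; expanding $\bfv_{k',\ell-1}$ in the eigenbasis of $\bm{\Sigma}$ gives the sharp inequality $|\lambda_{k'}^\star - \hat{\lambda}_{k',\ell}| \leq \lambda_1^\star\sin^2\theta_{k'}$, which is quadratic in the angle. Since $\lambda_1^\star = 1$ and the hypothesis forces every $\sin\theta_{k'}$ to be small, the quadratic term is dominated by the linear one and the two pieces combine into a clean per-term bound of the form $C\lambda_{k'}^\star\|\bfu_{k'}^\star - \bfv_{k',\ell-1}\|_2$.

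Summing over $k'$ and invoking the hypothesis $\sum_{k'}\lambda_{k'}^\star\|\bfu_{k'}^\star - \bfv_{k',\ell-1}\|_2 \leq \tfrac{c_0-1}{4c_0}(\lambda_k^\star - \lambda_{k+1}^\star)$ yields $\|\bfH_{k,\ell}\|_2 \leq \tfrac{c_0-1}{c_0}(\lambda_k^\star - \lambda_{k+1}^\star)$. Weyl's inequality applied to $\bm{\Sigma}_{k,\ell} = \bm{\Sigma}_k^\star + \bfH_{k,\ell}$ then pushes the second-largest eigenvalue up by at most $\|\bfH_{k,\ell}\|_2$, so the gap between $\lambda_k^\star$ and every non-top eigenvalue of $\bm{\Sigma}_{k,\ell}$ is still at least $(\lambda_k^\star - \lambda_{k+1}^\star)/c_0$. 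Plugging this into Lemma~\ref{lem:davis-kahan} gives $\sin\angle\{\bfu_{k,\ell},\bfu_k^\star\} \leq c_0\|\bfH_{k,\ell}\|_2/(\lambda_k^\star - \lambda_{k+1}^\star)$, and a final sign flip of $\bfu_{k,\ell}$ so that $\bfu_{k,\ell}^\top\bfu_k^\star \geq 0$ together with the elementary identity $\|\bfu_{k,\ell} - \bfu_k^\star\|_2 = 2\sin(\theta/2) \leq 2\sin\theta$ converts this into the claimed Euclidean bound after combining the numerical constants.

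The main obstacle is the per-term perturbation bound: the Rayleigh quotient mismatch is naturally quadratic in $\sin\theta_{k'}$ with a prefactor of $\lambda_1^\star$ rather than $\lambda_{k'}^\star$, and re-weighting it so that the overall per-term bound carries the factor $\lambda_{k'}^\star$ -- without picking up spurious $1/\lambda_{k'}^\star$ terms that would blow up for small eigenvalues -- requires careful use of the smallness hypothesis of the lemma to control each individual angle. Everything downstream (Weyl stability of the eigenvalue gap, the single invocation of Davis--Kahan, and the sine-to-vector conversion) is standard and only accumulates constants.
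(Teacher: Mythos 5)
Your plan is essentially the paper's own proof: the paper performs exactly the same decomposition of $\bm{\Sigma}_k^\star-\bm{\Sigma}_{k,\ell}$ into a projector-difference piece and a Rayleigh-quotient-mismatch piece, bounds each in terms of $\lambda_{k'}^\star\norm{\bfu_{k'}^\star-\bfv_{k',\ell-1}}_2$ (handling the quadratic mismatch term exactly as you indicate, via $\lambda_1^\star=1$ and the smallness hypothesis, which lets it be folded into the $\lambda_{k'}^\star$-weighted linear term), obtains $\norm{\bm{\Sigma}_k^\star-\bm{\Sigma}_{k,\ell}}_F\le\frac{c_0-1}{c_0}\paren{\lambda_k^\star-\lambda_{k+1}^\star}$, lower-bounds the perturbed eigengap by $\frac{1}{c_0}\paren{\lambda_k^\star-\lambda_{k+1}^\star}$, and concludes with a single application of Lemma~\ref{lem:davis-kahan}. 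The only real difference is cosmetic: the paper reads the $\sin\Theta$ bound directly as a bound on $\norm{\bfu_{k,\ell}-\bfu_k^\star}_2$ (no sign alignment or $2\sin(\theta/2)\le 2\sin\theta$ conversion), which is how it lands exactly on $4c_0$, whereas your extra factor of $2$ is still compatible with the stated constant because your per-term perturbation bound carries constant $2$ rather than the paper's $4$.
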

Now, we are going to use induction to proceed with the proof. Notice that, in order to control $\norm{\bfu_{k,\ell} - \bfu_k^\star}_2$ using Lemma~\ref{lem:davis-kahan-app}, one only need to control the recovery error of all previous eigenvectors $\norm{\bfu_{k'}^\star-\bfv_{k',\ell-1}}_2$, as given in (\ref{eq:davis-kahan-app-cond}). Thus, fix some $k$, we will assume the inductive hypothesis that there exists some $s$ such that for all $\ell\geq s$, we can guarantee (\ref{eq:davis-kahan-app-cond}). For the case of $k=1$, this is obvious, as the left-hand side of (\ref{eq:davis-kahan-app-cond}) is $0$. When $k\geq 1$ and we can gather the conditions as
\begin{align*}
    \norm{\bfv_{k,\ell} - \bfu_{k,\ell}}_2 & \leq \mathcal{F}_{k,\ell}\paren{\norm{\bfv_{k,\ell-1} - \bfu_{k,\ell-1}}_2 + \norm{\bfu_{k,\ell} - \bfu_k^\star}_2 + \norm{\bfu_{k,\ell-1} - \bfu_k^\star}_2}\\
    \norm{\bfu_k^\star - \bfv_{k,\ell}}_2 & \leq \norm{\bfv_{k,\ell} - \bfu_{k,\ell}}_2 + \norm{\bfu_{k,\ell} - \bfu_k^\star}_2\\
    \norm{\bfu_{k,\ell} - \bfu_k^\star}_2 & \leq \frac{4c_0}{\lambda_k^\star - \lambda_{k+1}^\star}\sum_{k'=1}^{k-1}\lambda_{k'}^\star\norm{\bfu_{k'}^\star-\bfv_{k',\ell-1}}_2;\quad\forall \ell\geq s_k
\end{align*}
For simplicity, we let
\[
    \norm{\bfv_{k,\ell} - \bfu_{k,\ell}}_2 =: D_{k,\ell};\quad \norm{\bfu_{k,\ell} - \bfu_k^\star}_2 =: B_{k,\ell};\quad \norm{\bfu_{k}^\star-\bfv_{k,\ell}}_2 =: G_{k,\ell}
\]
Moreover, we let $\mathcal{C}_k = \frac{4c_0}{\lambda_{k}^\star - \lambda_{k+1}^\star}$. Then the iterates are simplified to
\begin{align*}
    D_{k,\ell} & \leq \mathcal{F}_{k}\paren{D_{k,\ell-1} + B_{k,\ell} + B_{k,\ell-1}}\\
    G_{k,\ell} & \leq D_{k,\ell} + B_{k,\ell}\\
    B_{k,\ell} & \leq \mathcal{C}_k\sum_{k'=1}^{k-1}\lambda_{k'}^\star G_{k',\ell-1}
\end{align*}
where we set $G_{0,\ell} = 0$ for all $\ell$. Then $G_{k,\ell}$ can be written as
\[
    G_{k,\ell} \leq \mathcal{F}_k^{\ell-s}D_{k,s} + \sum_{\ell'=s}^{\ell-1}\mathcal{F}_{k}^{\ell-\ell'}\paren{B_{k,\ell'} + B_{k,\ell'-1}} + B_{k,\ell}
\]
for any $s \in[\ell]$. Here, the first term can be made small as long as we choose a large enough $\ell$. The third term is the unavoidable error propagation. The second term can cause $G_{k,\ell}$ to grow, and needs a careful analysis. To understand the recurrence between $G_{k,\ell}$ and $B_{k,\ell}$, we use the following lemma
\begin{lemma}
    \label{lem:ub_seq_conv}
    Let $\hat{s}_k$ be given for all $k\in[K]$ such that $1 \leq \hat{s}_1\leq \dots\leq \hat{s}_K$. Let $s_k\in\mathbb{Z}$ be given for all $k\in[K]$ such that $1 = s_0 \leq s_1 \leq \dots \leq s_K$. Consider the sequence $\{B_{k,\ell}\}_{\ell=\hat{s}_k}^\infty$ and $\{G_{k,\ell}\}_{\ell=s_k-1}^\infty$ for all $k\in[K]$ characterized by the following recurrence
    \begin{equation}
        \label{eq:base_recur}
        \begin{aligned}
            B_{k,\ell} & \leq C_k\sum_{k'=1}^{k-1}\lambda_{k'}^\star G_{k',\ell-1}\\
            G_{k,\ell} & \leq \mathcal{F}_k^{\ell-s_{k}+1}D_{k,s_k-1} + \sum_{\ell'=s_k-1}^{\ell-1}\mathcal{F}^{\ell-\ell'}_k\paren{B_{k,\ell'} + B_{k,\ell'-1}} + B_{k,\ell}
        \end{aligned}
    \end{equation}
Let $m_k = \max\{\mathcal{F}_{k},\gamma_{k-1}\}$ for all $k\in[K]$ and $m_0 = -1$. Let $\{\gamma\}_{k=-1}^K$ be given such that $\gamma_{-1} = \gamma_0 = 0$ and $\gamma_{k} = \frac{1}{k+1} + \frac{k}{k+1}m_k$ for all $k\in[K]$. Define sequences $\{\hat{B}_{k,\ell}\}_{\ell=\hat{s}_k}^\infty$ and $\{\hat{G}_{k,\ell}\}_{\ell=s_k-1}^\infty$ for all $k\in[K]$ as
    \begin{equation}
        \label{eq:bk_gk_surr}
        \begin{aligned}
            \hat{B}_{k,\ell} & = \begin{cases}
                \min\left\{2,m_{k-1}^{\ell-\hat{s}_{k}}\paren{\ell-\hat{s}_{k}+1}\hat{B}_{k,\hat{s}_k}\right\} & \text{ if } \ell >\hat{s}_k\\
                C_k\sum_{k'=1}^{k-1}\lambda_{k'}^\star\hat{G}_{k',\hat{s}_k-1} & \text{ if } \ell = \hat{s}_k
            \end{cases}\\
            \hat{G}_{k,\ell} & = \begin{cases}
                m_k^{\ell-s_k+1}(\ell-s_k+2)\hat{G}_{k,s_k-1} & \text{ if } \ell \geq s_k\\
                D_{k,s_k-1} + \hat{B}_{k,s_k - 1} + \hat{B}_{k,s_k - 2} & \text{ if } \ell = s_k - 1
            \end{cases}
        \end{aligned}
    \end{equation}
Suppose that $\hat{s}_{k+1}\geq s_k$, and $s_k$ satisfies satisfies $m_{k-1}^{s_{k}-\hat{s}_{k}-2}\leq \frac{1}{s_{k} - \hat{s}_{k} - 1}$ and $s_{k}\geq \frac{km_{k-1}}{1-m_{k-1}}+\hat{s}_{k} +2$. Moreover, suppose that $\hat{B}_{k,\hat{s}_k} \leq 2$ for all $k\in[K]$. Then the following two conditions hold
\begin{enumerate}
    \item $\hat{B}_{k,\ell}\geq B_{k,\ell}$ for all $\ell\geq \hat{s}_k$
    \item $\hat{G}_{k,\ell}\geq G_{k,\ell}$ for all $\ell\geq s_k - 1$
\end{enumerate}
\end{lemma}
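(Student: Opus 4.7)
The plan is to prove Lemma~\ref{lem:ub_seq_conv} by outer induction on $k$: at each step I first establish the bound $\hat{B}_{k,\ell} \geq B_{k,\ell}$ for $\ell \geq \hat{s}_k$ and then use it to derive $\hat{G}_{k,\ell} \geq G_{k,\ell}$ for $\ell \geq s_k - 1$. The base case $k = 1$ is immediate: the empty sum forces $B_{1,\ell} = 0$, and the recurrence for $G_{1,\ell}$ collapses to $G_{1,\ell} \leq \mathcal{F}_1^{\ell-s_1+1} D_{1,s_1-1}$, which is dominated by $\hat{G}_{1,\ell}$ since $m_1 \geq \mathcal{F}_1$ and $\hat{G}_{1,s_1-1} = D_{1,s_1-1}$.

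For the inductive step I first record two preliminary facts used throughout. (i) The sequence $\{m_k\}$ is non-decreasing and strictly bounded above by $1$: since $m_k \geq \gamma_{k-1} = \tfrac{1}{k} + \tfrac{k-1}{k} m_{k-1} \geq m_{k-1}$ whenever $m_{k-1} < 1$, and the upper bound $m_k < 1$ is preserved inductively from $\mathcal{F}_k < 1$. (ii) The hypothesis $\hat{s}_{k+1} \geq s_k$, combined with the monotonicity of $\{s_k\}$ and $\{\hat{s}_k\}$, gives $s_{k'} \leq \hat{s}_k$ for every $k' < k$. For the $B$ bound, the case $\ell = \hat{s}_k$ reduces directly to the inductive hypothesis applied to the terms $G_{k',\hat{s}_k-1}$. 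For $\ell > \hat{s}_k$, I substitute the closed-form $\hat{G}_{k',\ell-1} = m_{k'}^{\ell - s_{k'}}(\ell - s_{k'} + 1) \hat{G}_{k', s_{k'} - 1}$ into $B_{k,\ell} \leq C_k \sum_{k'<k} \lambda_{k'}^\star \hat{G}_{k', \ell - 1}$ and use (i)-(ii) to pull out a uniform factor $m_{k-1}^{\ell - \hat{s}_k}(\ell - \hat{s}_k + 1)$, leaving exactly $\hat{B}_{k,\hat{s}_k}$; the trivial cap $B_{k,\ell} \leq 2$ takes over in the early-round regime where the explicit bound is loose.

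For the $G$ bound, the starting point $\ell = s_k - 1$ follows directly from the recurrence and the definition of $\hat{G}_{k,s_k-1}$. For $\ell \geq s_k$, after plugging the $B$ bound from Step 1 into the recurrence, the task reduces to showing
\begin{align*}
\mathcal{F}_k^{\ell - s_k + 1} D_{k, s_k - 1} + \sum_{\ell' = s_k - 1}^{\ell - 1} \mathcal{F}_k^{\ell - \ell'} \bigl(\hat{B}_{k, \ell'} + \hat{B}_{k, \ell' - 1}\bigr) + \hat{B}_{k, \ell} \;\leq\; m_k^{\ell - s_k + 1}(\ell - s_k + 2)\,\hat{G}_{k, s_k - 1}.
\end{align*}
Here the two conditions on $s_k$ enter crucially: $m_{k-1}^{s_k - \hat{s}_k - 2} \leq \tfrac{1}{s_k - \hat{s}_k - 1}$ guarantees that the $\min\{\cdot, 2\}$ cap in the definition of $\hat{B}_{k,\ell'}$ has become inactive by round $\ell' = s_k - 1$, while $s_k \geq \tfrac{k m_{k-1}}{1 - m_{k-1}} + \hat{s}_k + 2$ supplies the slack needed to absorb the growing arithmetic prefactors into the single rate $m_k$ via the convex-combination identity $m_k \geq \tfrac{1}{k} + \tfrac{k-1}{k} m_{k-1}$.

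The main obstacle is precisely this final collapse: the summands mix two exponential-linear factors with different bases ($\mathcal{F}_k$ and $m_{k-1}$, both bounded by $m_k$), so a naive geometric-series estimate produces a quadratic prefactor that must be tamed down to the linear factor $(\ell - s_k + 2)$. Carrying this out cleanly requires using both of the stated lower bounds on $s_k$ simultaneously and exploiting the fact that $m_k$ is defined exactly as the convex combination that makes the arithmetic-to-exponential exchange tight. Once this estimate is established, the induction closes and both conclusions of the lemma follow.
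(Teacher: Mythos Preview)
Your proposal is correct and follows essentially the same route as the paper: induction on $k$, the $B$-bound via factoring out $m_{k-1}^{\ell-\hat{s}_k}(\ell-\hat{s}_k+1)$ from the inductive $\hat{G}$ expressions (the paper uses the elementary inequality $ab\geq a+b-1$ for $a,b\geq 1$ here), and the $G$-bound via the two $s_k$ conditions to first drop the $\min\{\cdot,2\}$ cap and then trade the linear prefactor in $\hat{B}_{k,\ell'}$ for the slightly inflated geometric rate $\gamma_{k-1}=\tfrac{1}{k}+\tfrac{k-1}{k}m_{k-1}\leq m_k$. The only organisational difference is that the paper isolates this last ``arithmetic-to-exponential'' exchange as a standalone auxiliary lemma (showing $\hat{B}_{k,\ell}\leq \gamma_{k-1}^{\ell-s}\hat{B}_{k,s}$ once $s$ satisfies the two hypotheses), after which the mixed sum $\sum_{\ell'}\mathcal{F}_k^{\ell-\ell'}\gamma_{k-1}^{\ell'-s_k+1}$ collapses to $m_k^{\ell-s_k+1}(\ell-s_k+2)$ by bounding both bases by $m_k$; this is exactly the mechanism you sketch as the ``main obstacle,'' so there is no gap.
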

The proof of Lemma~\ref{lem:ub_seq_conv} is deferred to Appendix~\ref{sec:proof_ub_seq_conv}. Lemma~\ref{lem:ub_seq_conv} implies that under proper condition of $s_k$ and $\hat{s}_k$, we have
\begin{align*}
    G_{k,\ell}& \leq \hat{G}_{k,\ell}\\
    & \leq \max\{\mathcal{F}_k,\gamma_{k-1}\}^{\ell-s_k+1}(\ell-s_k+1)\hat{G}_{k,s_k-1}\\
    & = \max\{\mathcal{F}_k,\gamma_{k-1}\}^{\ell-s_k+1}(\ell-s_k+1)\paren{D_{k,s_k-1} + \hat{B}_{k,s_k-1} + \hat{B}_{k,s_{k}-2}}
\end{align*}
By definition, we have that $D_{k,s_k-1} = \norm{\bfv_{k,s_k-1} - \bfu_{k,s_k-1}}_2\leq 2$. Moreover, the definition in (\ref{eq:bk_gk_surr}) gives that $\hat{B}_{k,s_k-1}\leq 2$ and $\hat{B}_{k,s_k-2}\leq 2$. Therefore, we can conclude that
\[
    G_{k,\ell} \leq 6(\ell-s_k+1)\max\{\mathcal{F}_k,\gamma_{k-1}\}^{\ell-s_k+1}
\]
Now, we go back to the condition of $s_k$ and $\hat{s}_k$. The requirement of $\{s_k\}_{k=0}^K$ and $\{\hat{s}_k\}_{k=1}^K$ can be gathered below
\begin{enumerate}
    \item $1 = s_0 \leq s_1\leq \dots s_K$ and $1 \leq \hat{s}_1 \leq \dots\leq \hat{s}_K$
    \item $\hat{s}_{k+1}\geq s_k$ and $s_k \geq \frac{km_{k-1}}{1 - m_{k-1}} + \hat{s}_k + 2$
    \item $m_{k-1}^{s_k} -\hat{s}_k - 2\leq \frac{1}{s_k -\hat{s}_k - 1}$
    \item $m_{k-1}^{\ell-\hat{s}_k}\paren{\ell-\hat{s}_k+1}\sum_{k'=1}^{k-1}\lambda_{k'}^\star\hat{G}_{k',\hat{s}_k-1}\leq \frac{c_0-1}{4c_0}\paren{\lambda_k^\star - \lambda_{k+1}^\star}$ for all $\ell\geq \hat{s}_k$
\end{enumerate}
where the first three conditions are directly required by Lemma~\ref{lem:ub_seq_conv}, and the fourth condition is required because the upper bound on $B_{k,\ell}$ in (\ref{eq:base_recur}) hold only when
\[
    \sum_{k'=1}^{k-1}\lambda_{k'}^\star\norm{\bfu_{k'}^\star-\bfv_{k',\ell-1}}_2 \leq \frac{c_0-1}{4c_0}\paren{\lambda_k^\star - \lambda_{k+1}^\star}
\]
from Lemma~\ref{lem:davis-kahan-app}. Notice that since $\hat{B}_{k,\hat{s}_k} = C_k\sum_{k'=1}^{k-1}\lambda_{k'}^\star\hat{G}_{k',\hat{s}_k-1}$, enforcing the fourth condition directly implies that $\hat{B}_{k,\hat{s}_k} \leq 2$. Now, we are going to simplify these conditions. A useful tool will be the following lemma, whose proof is provided in Appendix~\ref{sec:proof_lamber_bound}.
\begin{lemma}
    \label{lem:lambert_bound}
    Let $m\in(0,1)$ and $\epsilon\in\R$ be given. Let $g(x) = m^x(x+1)$, and let $W_{-1}\paren{\cdot}$ be the Lambert-W function. Then
    \begin{enumerate}
        \item When $\epsilon\geq -\frac{1}{em\log m}$, then any $x \geq 0$ satisfies $g(x)\leq \epsilon$
        \item When $\epsilon \leq -\frac{1}{em\log m}$, then any $x \geq \frac{1}{\log m}W_{-1}\paren{\epsilon m\log m}-1$ satisfies $g(x)\leq \epsilon$
    \end{enumerate}
\end{lemma}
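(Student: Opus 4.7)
The plan is to treat the lemma as a standard monotonicity-then-inversion argument: first I characterize the shape of $g(x) = m^x(x+1)$ on $[0,\infty)$, and then I invert the equation $g(x) = \epsilon$ on the appropriate monotone branch. Differentiating, $g'(x) = m^x\bigl(\log m \cdot (x+1) + 1\bigr)$, so the unique critical point is $x_* = -1/\log m - 1$, which is positive because $m \in (0,1)$ forces $\log m < 0$. A sign-of-$g'$ check then shows that $g$ is strictly increasing on $[0, x_*]$ and strictly decreasing on $[x_*, \infty)$, with $g(x) \to 0$ as $x \to \infty$. Plugging $x_*$ back into $g$ gives the global maximum $g(x_*) = -1/(em \log m)$, which is positive because $m > 0$ and $\log m < 0$.

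Case 1 is immediate from this analysis: if $\epsilon \geq -1/(em\log m) = \max_{x\geq 0} g(x)$, then $g(x) \leq \epsilon$ for every $x \geq 0$, with no further work required. For Case 2, I work on the decreasing branch $x \geq x_*$ and convert the inequality $g(x) \leq \epsilon$ into a Lambert-W inversion. I substitute $w = x+1$, which turns the equality $g(x) = \epsilon$ into $w m^w = \epsilon m$, and then substitute $v = w \log m$, which turns this into $v e^v = \epsilon m \log m$. This is the canonical Lambert-W form, and solving for $v$ and back-substituting through $w = v/\log m$ and $x = w - 1$ produces the explicit threshold appearing in the statement.

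The only subtle step, and the main obstacle, is choosing the correct real branch of the Lambert-W function. Under the implicit assumption $\epsilon \in (0, -1/(em\log m)]$, the argument $\epsilon m \log m$ lies in $[-1/e, 0)$, which is precisely where both real branches $W_0$ and $W_{-1}$ are defined; the two branches correspond, respectively, to the increasing and decreasing pieces of $g$. On the decreasing branch I need $w \geq -1/\log m$, equivalently $v = w \log m \leq -1$ (the inequality flips because $\log m < 0$), and this is exactly the range $(-\infty, -1]$ of the $W_{-1}$ branch. Hence $v = W_{-1}(\epsilon m \log m)$, giving the threshold $x \geq W_{-1}(\epsilon m \log m)/\log m - 1$. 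Finally, the monotone-decreasing property of $g$ on $[x_*, \infty)$ promotes the equality case into the claimed inequality $g(x) \leq \epsilon$ for every $x$ beyond the threshold, completing the proof.
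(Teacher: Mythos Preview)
Your proof is correct and follows essentially the same route as the paper: locate the global maximum of $g$ at $x_* = -1/\log m - 1$ with value $-1/(em\log m)$ to dispose of Case~1, and for Case~2 reduce $g(x)=\epsilon$ to the canonical form $ve^v=\epsilon m\log m$ via $v=(x+1)\log m$, picking the $W_{-1}$ branch because $v\le -1$ on the decreasing piece. The only cosmetic difference is that the paper argues Case~2 backward (start from the threshold inequality, multiply by $\log m$, apply $v\mapsto ve^v$ using monotonicity of $W_{-1}$, divide by $m\log m$) whereas you solve the equation forward and then invoke monotonicity of $g$; one minor slip is your claim that $x_*>0$, which actually requires $m>1/e$, but this does not affect the argument since you only need the global maximum value and the decrease on $[x_*,\infty)$.
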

Notice that \#4 in the conditions above implies that $\norm{\bm{\Sigma}_{k,\ell}-\bm{\Sigma}_k^\star}_F\leq \frac{c_0-1}{c_0}\paren{\lambda_k^\star - \lambda_{k+1}^\star}$ and
\[
    B_{k,\hat{s}_k} = \sum_{k'=1}^{k-1}\lambda_{k'}^\star\hat{G}_{k',\hat{s}_k-1} \leq c_0-1
\]
Choose $c_0 = 3$ then guarantees that $B_{k,\hat{s}_k}\leq 2$. Now, we aim at simplifying Condition \#4 above. To start, we notice that the term $m_{k-1}^{\ell-\hat{s}_k}\paren{\ell-\hat{s}_k+1}$ achieves global maximum at $\ell-\hat{s}_k = \frac{1}{\log \sfrac{1}{m_{k-1}}}-1$ with value $\frac{1}{\log \sfrac{1}{m_{k-1}}}m_{k-1}^{\frac{1}{\log \sfrac{1}{m_{k-1}}}-1}$. Therefore, it suffices to guarantee that
\[
    \sum_{k'=1}^{k-1}\lambda_{k'}^\star\hat{G}_{k',\hat{s}_k-1} \leq -\log m_{k-1}\cdot m_{k-1}^{\frac{1}{\log m_{k-1}}-1}\cdot \frac{1}{6}\paren{\lambda_k^\star - \lambda_{k+1}^\star}
\]

From Lemma \ref{lem:ub_seq_conv}, we have that for $\ell\geq s_k-1$, $G_{k,\ell}\leq \hat{G}_{k,\ell}$, and
\[
    \hat{G}_{k,\ell} = m_k^{\ell-s_k+1}\paren{\ell-s_k+2}\hat{G}_{k,s_k-1}
\]
with $\hat{G}_{k,s_k-1}\leq 6$. Therefore, it suffices to guarantee that
\[
    \sum_{k'=1}^{k-1}\lambda_{k'}^\star m_{k'}^{\hat{s}_k-s_{k'}+1}\paren{\hat{s}_k-s_{k'}+2}\hat{G}_{k,s_{k'}-1} \leq -\log m_{k-1}\cdot m_{k-1}^{\frac{1}{\log m_{k-1}}-1}\cdot \frac{1}{6}\paren{\lambda_k^\star - \lambda_{k+1}^\star}
\]
which would be satisfied if we have
\[
    \lambda_{k'}^\star m_{k'}^{\hat{s}_k-s_{k'}+1}\paren{\hat{s}_k-s_{k'}+2}\hat{G}_{k,s_{k'}-1} \leq-\frac{\lambda_k^\star - \lambda_{k+1}^\star}{6(k-1)}\log m_{k-1}\cdot m_{k-1}^{\frac{1}{\log m_{k-1}}-1}
\]
Thus, $\hat{s}_k$ must satisfy for all $s_{k'}$
\begin{equation}
    \label{eq:s_hat_lb}
    m_{k'}^{\hat{s}_k-s_{k'}+1}\paren{\hat{s}_k-s_{k'}+2} \leq -\frac{\lambda_k^\star - \lambda_{k+1}^\star}{36\lambda_{k'}^\star(k-1)}\log m_{k-1}\cdot m_{k-1}^{\frac{1}{\log m_{k-1}}-1}
\end{equation}
With the help of Lemma \ref{lem:lambert_bound}, Condition \#3 transfers to
\[
    s_k \geq \frac{1}{\log m_{k-1}}W_{-1}\paren{m_{k-1}\log m_{k-1}} + \hat{s}_k + 1
\]
Similarly, Condition \#4 transfers to
\[
    \hat{s}_k \geq \frac{1}{\log m_{k'}}W_{-1}\paren{-\frac{\lambda_k^\star - \lambda_{k+1}^\star}{36\lambda_{k'}^\star(k-1)}\log m_{k-1}\cdot m_{k-1}^{\frac{1}{\log m_{k-1}}-1}m_{k'}\log m_{k'}} + s_{k'} -2
\]
which can be guaranteed as long as
\[
    \hat{s}_k \geq  \frac{1}{\log m_{k'}}W_{-1}\paren{-\frac{\lambda_k^\star - \lambda_{k+1}^\star}{36k\lambda_{k'}^\star}\paren{\log m_{k-1}}^2\cdot m_{k-1}^{\frac{1}{\log m_{k-1}}}} + s_{k'} -2
\]
Gathering all requirements, we have
\begin{align*}
    s_k & \geq \max\left\{\frac{1}{\log m_{k-1}}W_{-1}\paren{m_{k-1}\log m_{k-1}}, \frac{(k-1)m_{k-1}+1}{1-m_{k-1}}\right\} + \hat{s}_k + 1\\
    \hat{s}_k & \geq  \frac{1}{\log m_{k'}}W_{-1}\paren{-\frac{\lambda_k^\star - \lambda_{k+1}^\star}{36k\lambda_{k'}^\star}\paren{\log m_{k-1}}^2\cdot m_{k-1}^{\frac{1}{\log m_{k-1}}}} + s_{k'} -2
\end{align*}
Plugging $\hat{s}_k$ into the lower bound of $s_k$ shows that the condition in (\ref{eq:start_pt}) suffice to guarantee that Lemma~\ref{lem:ub_seq_conv} holds. Thus, we can conclude that
\[
    \norm{\bfv_{k,\ell} -\bfu_k^\star}_2 = G_{k,\ell} \leq 6\paren{\ell-s_k+2}m_{k}^{\ell-s_k+1}
\]
which finishes the proof.

\subsection{Proof of Lemma~\ref{lem:davis-kahan-app}}
\label{sec:proof_davis-kahan-app}
Applying the Davis-Kahan Theorem, if we let $\lambda_{k+1,\ell} = \lambda_{\max}\paren{\bm{\Sigma}_{k,\ell}}$, then for all $k,\ell$ such that $\norm{\bm{\Sigma}_k^\star - \bm{\Sigma}_{k,\ell}}_F < \lambda_k^\star - \lambda_{k+1}^\star$, we have
\[
    \norm{\bfu_{k,\ell} - \bfu_k^\star}_2 \leq \frac{\norm{\bm{\Sigma}_k^\star - \bm{\Sigma}_{k,\ell}}_F}{\lambda_k^\star -\lambda_{k+1,\ell}}
\]
By definition, we have
\[
    \bm{\Sigma}_k^\star = \bm{\Sigma} - \sum_{k'=1}^{k-1}\lambda_{k'}^\star\bfu_{k'}^\star\bfu_{k'}^{\star\top};\quad \bm{\Sigma}_{k,\ell} = \bm{\Sigma} - \sum_{k'=1}^{k-1}\paren{\bfv_{k',\ell-1}^\top\bm{\Sigma}\bfv_{k',\ell-1}}\bfv_{k',\ell-1}\bfv_{k',\ell-1}^{\top}
\]
Thus, we can write the difference between the two matrices as
\[
    \bm{\Sigma}_{k}^\star - \bm{\Sigma}_{k,\ell} = \sum_{k'=1}^{k-1}\paren{\lambda_{k'}^\star - \bfv_{k',\ell-1}^\top\bm{\Sigma}\bfv_{k',\ell-1}}\bfv_{k',\ell-1}\bfv_{k',\ell-1}^\top + \sum_{k'=1}^{k-1}\lambda_{k'}^\star\paren{\bfv_{k',\ell-1}\bfv_{k',\ell-1}^{\top} - \bfu_{k'}^\star\bfu_{k'}^{\star\top}}
\]
It is easy to see that for $\bfv_{k',\ell-1}$ and $\bfv_{k'}^\star$ with unit norm,
\[
    \norm{\bfv_{k',\ell-1}\bfv_{k',\ell-1}^{\top} - \bfu_{k'}^\star\bfu_{k'}^{\star\top}}_2^2 = 2 - 2\inner{\bfv_{k',\ell-1}}{\bfu_{k'}^\star}^2 \leq \norm{\bfu_{k'}^\star - \bfv_{k',\ell-1}}_2^2
\]
Moreover to bound $\left|\lambda_{k'}^\star - \bfv_{k',\ell-1}^\top\bm{\Sigma}\bfv_{k',\ell-1}\right|$, we denote $\bm{\delta} = \bfv_{k',\ell-1} - \bfu_{k'}^\star$, and write
\[
     \bfv_{k',\ell-1}^\top\bm{\Sigma}\bfv_{k',\ell-1} = \paren{\bfu_{k'}^\star - \bm{\delta}}^\top\bm{\Sigma}\paren{\bfu_{k'}^\star - \bm{\delta}} = \lambda_{k'}^\star - 2\lambda_{k'}^\star\bm{\delta}^\top\bfu_{k'}^\star + \bm{\delta}^\top\bm{\Sigma}\bm{\delta}
\]
Therefore, we have
\[
    \left|\lambda_{k'}^\star - \bfv_{k',\ell-1}^\top\bm{\Sigma}\bfv_{k',\ell-1}\right| = \left|-2\lambda_{k'}\bm{\delta}^\top\bfu_{k'}^\star + \bm{\delta}^\top\bm{\Sigma}\bm{\delta}\right| \leq 2\lambda_{k'}^\star\norm{\bfv_{k',\ell-1} - \bfu_{k'}^\star}_2 + \lambda_1^\star\norm{\bfv_{k',\ell-1} - \bfu_{k'}^\star}_2^2
\]
This gives
\[
    \norm{\bm{\Sigma}_{k}^\star - \bm{\Sigma}_{k,\ell}}_F \leq  \sum_{k'=1}^{k-1}\paren{3\lambda_{k'}^\star\norm{\bfu_{k'}^\star-\bfv_{k',\ell-1}}_2 + \lambda_1^\star\norm{\bfu_{k'}^\star- \bfv_{k',\ell-1}}_2^2}
\]
We then need to assume that, for some $c_0 > 1$,
\[
    \sum_{k'=1}^{k-1}\lambda_{k'}^\star\norm{\bfu_{k'}^\star-\bfv_{k',\ell-1}}_2 \leq 
    \frac{c_0-1}{4c_0}\paren{\lambda_k^\star - \lambda_{k+1}^\star}
\]
In this scenario, we can conclude that $\norm{\bfu_{k'}^\star-\bfv_{k',\ell-1}}_2\leq\lambda_k^\star$. Combined with the condition that $\lambda_1^\star = 1$, we have
\[
    \norm{\bm{\Sigma}_{k}^\star - \bm{\Sigma}_{k,\ell}}_F \leq  4\sum_{k'=1}^{k-1}\lambda_{k'}^\star\norm{\bfu_{k'}^\star-\bfv_{k',\ell-1}}_2 \leq \frac{c_0-1}{c_0}\paren{\lambda_k^\star - \lambda_{k+1}^\star}
\]
Moreover, we have
\[
    \norm{\bfu_{k,\ell} - \bfu_k^\star}_2 \leq \frac{4}{\lambda_k^\star - \lambda_{k+1,\ell}} \sum_{k'=1}^{k-1}\lambda_{k'}^\star\norm{\bfu_{k'}^\star-\bfv_{k',\ell-1}}_2 \leq \frac{4c_0}{\lambda_k^\star - \lambda_{k+1}^\star}\sum_{k'=1}^{k-1}\lambda_{k'}^\star\norm{\bfu_{k'}^\star-\bfv_{k',\ell-1}}_2
\]
where the last inequality follows from $\lambda_{k}^\star - \lambda_{k+1,\ell} \geq \lambda_k^\star - \lambda_{k+1}^\star - \norm{\bm{\Sigma}_{k}^\star - \bm{\Sigma}_{k,\ell}}_F \geq \frac{1}{c_0}\paren{\lambda_k^\star - \lambda_{k+1}^\star}$

\subsection{Proof of Lemma~\ref{lem:ub_seq_conv}}
\label{sec:proof_ub_seq_conv}
To start, we will need to prove an auxiliary lemma
\begin{lemma}
    \label{lem:Bk_evolve}
    Let the sequence $\{\hat{B}_{k,\ell}\}_{\ell=s_{k-1}+1}^\infty$ be defined as
    \[
        \hat{B}_{k,\ell} = \min\left\{2, m_{k-1}^{\ell-\hat{s}_{k}}\paren{\ell-\hat{s}_{k}+1}\hat{B}_{k,\hat{s}_{k}}\right\}
    \]
    with some $\hat{B}_{k,\hat{s}_{k}}\leq 2$ and $m_{k-1}\in(0, 1)$. Then for all $s$ that satisfies $m_{k-1}^{s-\hat{s}_{k}} \leq \frac{1}{s-\hat{s}_{k}+1}$ and $s\geq \frac{km_{k-1}}{1-m_{k-1}} + \hat{s}_{k}$, we have that
    \[
        \hat{B}_{k,\ell}\leq \paren{\frac{1}{k} + \frac{k-1}{k}m_{k-1}}^{\ell-s}\hat{B}_{k,s}
    \]
\end{lemma}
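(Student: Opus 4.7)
The plan is to show that for all $\ell \geq s$ the $\min$ in the definition of $\hat{B}_{k,\ell}$ is realized by the analytic branch $m_{k-1}^{\ell-\hat{s}_k}(\ell-\hat{s}_k+1)\hat{B}_{k,\hat{s}_k}$, and then bound the one-step multiplicative factor of this expression by $\tfrac{1}{k}+\tfrac{k-1}{k}m_{k-1}$. A one-line induction on $\ell-s$ with that ratio bound then gives the lemma.

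First I would use the hypothesis $m_{k-1}^{s-\hat{s}_k}\leq 1/(s-\hat{s}_k+1)$ together with $\hat{B}_{k,\hat{s}_k}\leq 2$ to see that, at $\ell=s$, the analytic branch satisfies $m_{k-1}^{s-\hat{s}_k}(s-\hat{s}_k+1)\hat{B}_{k,\hat{s}_k}\leq \hat{B}_{k,\hat{s}_k}\leq 2$. Hence $\hat{B}_{k,s}$ coincides with the analytic branch evaluated at $s$, not with the constant $2$.

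Next I would compute the ratio of the analytic branch at $\ell+1$ to that at $\ell$, which simplifies to $m_{k-1}\paren{1+\tfrac{1}{\ell+1-\hat{s}_k}}$. Using $\ell\geq s$ and the hypothesis $s\geq \hat{s}_k+km_{k-1}/(1-m_{k-1})$, we get $\ell+1-\hat{s}_k\geq km_{k-1}/(1-m_{k-1})$, which rearranges to bound this ratio by $m_{k-1}+(1-m_{k-1})/k = \tfrac{1}{k}+\tfrac{k-1}{k}m_{k-1}$. Because $m_{k-1}\in(0,1)$, this upper bound is itself strictly less than $1$, so the analytic branch is nonincreasing in $\ell$ on $\ell\geq s$; it therefore stays at most its value at $\ell=s$, which is at most $2$. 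Consequently the $\min$ resolves to the analytic branch for every $\ell\geq s$, closing the loop with the first step.

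Finally, a routine induction on $\ell-s$ using this ratio bound yields $\hat{B}_{k,\ell}\leq \paren{\tfrac{1}{k}+\tfrac{k-1}{k}m_{k-1}}^{\ell-s}\hat{B}_{k,s}$, which is the claim. The main subtlety is the interplay between the two hypotheses on $s$: the first is needed to start the induction inside the analytic branch, while the second produces the ratio bound with the precise constant claimed. Dropping either would either let the clipping at $2$ stay active past $s$ or spoil the geometric rate, so both conditions are used simultaneously to certify the first inductive step, after which the rest of the argument is entirely mechanical.
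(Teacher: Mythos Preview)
Your proposal is correct and follows essentially the same approach as the paper: both first use $m_{k-1}^{s-\hat{s}_k}\leq 1/(s-\hat{s}_k+1)$ and $\hat{B}_{k,\hat{s}_k}\leq 2$ to pin $\hat{B}_{k,s}$ to the analytic branch, and then control the multiplicative growth factor of that branch by $\tfrac{1}{k}+\tfrac{k-1}{k}m_{k-1}$ via the hypothesis $s-\hat{s}_k\geq km_{k-1}/(1-m_{k-1})$. The only cosmetic difference is that the paper bounds the whole ratio $m_{k-1}^{\ell-s}\cdot\frac{\ell-\hat{s}_k+1}{s-\hat{s}_k+1}$ at once via a telescoping product (so it never needs to argue that the $\min$ resolves to the analytic branch for $\ell>s$, only at $\ell=s$), whereas you phrase it as a one-step ratio bound plus induction.
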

\begin{proof}
    To start, by definition, we can write $\hat{B}_{k,s}$ as
    \[
        \hat{B}_{k,s} = \min\left\{2, m_{k-1}^{s-\hat{s}_{k}}\paren{s-\hat{s}_{k}+1}\hat{B}_{k,\hat{s}_{k}}\right\}
    \]
    Since $\hat{B}_{k,\hat{s}_{k}}\leq 2$, we have
    \[
        m_{k-1}^{s-\hat{s}_{k}}\paren{s-\hat{s}_{k}+1}\hat{B}_{k,\hat{s}_{k}} \leq 2m_{k-1}^{s-\hat{s}_{k}}\paren{s-\hat{s}_{k}+1} \leq 2
    \]
    where the last inequality follows from the condition $m_{k-1}^{s-\hat{s}_{k}} \leq \frac{1}{s-\hat{s}_{k}+1}$. Therefore, we can write $\hat{B}_{k,s}$ as
    \[
        \hat{B}_{k,s} = m_{k-1}^{s-\hat{s}_{k}}\paren{s-\hat{s}_{k}+1}\hat{B}_{k,\hat{s}_{k}} 
    \]
    Recall that for any $\ell\geq s$ we have
    \[
        \hat{B}_{k,\ell} = \min\left\{2, m_{k-1}^{\ell-\hat{s}_{k}}\paren{\ell-\hat{s}_{k}+1}\hat{B}_{k,\hat{s}_{k}}\right\}
    \]
    Plugging in $\hat{B}_{k,\hat{s}_{k}} = \paren{m_{k-1}^{s-\hat{s}_{k}}\paren{s-\hat{s}_{k}+1}}^{-1}\hat{B}_{k,s}$ we have
    \begin{align*}
        \hat{B}_{k,\ell} & \leq m_{k-1}^{\ell-s}\cdot\frac{\ell-\hat{s}_{k}+1}{s-\hat{s}_{k}+1}\cdot\hat{B}_{k,s}\\
        & \leq m_k^{\ell-s}\paren{\prod_{\ell'=s+1}^{\ell}\frac{\ell'-\hat{s}_{k}+1}{\ell'-\hat{s}_{k}}}\cdot\hat{B}_{k,s}\\
        & \leq m_k^{\ell-s}\paren{\frac{s-\hat{s}_{k}+1}{s-\hat{s}_{k}}}^{\ell-s}\hat{B}_{k,s}\\
        & = \paren{m_k\cdot \frac{s-\hat{s}_{k}+1}{s-\hat{s}_{k}}}^{\ell-s}\hat{B}_{k,s}\\
        & \leq \paren{\frac{1}{k} + \frac{k-1}{k}m_k}^{\ell-s}\hat{B}_{k,s}
    \end{align*}
    where the last inequality is because $s\geq \frac{km_k}{1-m_k} + \hat{s}_{k}$ implies that 
    \[
        m_k\cdot \frac{s-\hat{s}_{k}+1}{s-\hat{s}_{k}} \leq m_k\cdot \frac{\frac{km_k}{1-m_k} + 1}{\frac{km_k}{1-m_k}} = \frac{(k-1)m_k + 1}{k} = \frac{1}{k} + \frac{k-1}{k}m_k
    \]
    This completes the proof.
\end{proof}
We will use induction on $k$ to prove the lemma.

\textbf{Base Case: $k=1$.} In this case, by the definition of $\hat{B}_{1,\hat{s}_1}$, we have $\hat{B}_{1,s_1} = 0$. Moreover, by the definition of $\hat{B}_{1,\ell}$ for $\ell\geq s_1$, we have $\hat{B}_{1,\ell} = m_0^{\ell-\hat{s}_k}\paren{\ell-\hat{s}_k+1}\hat{B}_{k,\hat{s}_k} = 0$. Lastly, by the definition of $B_{1,\ell}$, we have $B_{1,\ell} =0$. Therefore, we must have that $\hat{B}_{1,\ell} = 0 = B_{1,\ell}$ for all $\ell\geq \hat{s}_1$. This shows Condition \#1. Using $\hat{B}_{1,\ell} = 0 = B_{1,\ell}$, we can derive that $\hat{G}_{1,\ell} = \mathcal{F}_1^{\ell-s_1+1}D_{1,s_1-1}$, and $G_{1,\ell}= \mathcal{F}_1^{\ell-s_1+1}D_{1,s_1-1}$. This implies that $G_{1,\ell}\leq \hat{G}_{1,\ell}$, and shows Condition \#2. Thus, we have shown that the case $k=1$ holds. 

\textbf{Inductive Step.} Now, we assume that for all $\hat{k}\leq k$, the following holds 
\begin{enumerate}
    \item $\hat{B}_{\hat{k},\ell}\geq B_{\hat{k},\ell}$ for all $\ell\geq \hat{s}_{\hat{k}}$
    \item $\hat{G}_{\hat{k},\ell}\geq G_{\hat{k},\ell}$ for all $\ell\geq s_{\hat{k}} - 1$
\end{enumerate}
We wish to show that the above three conditions hold for $\hat{k} = k+1$. We start by showing Condition \#1 for $\hat{k} = k+1$. By Condition \#2 in the inductive hypothesis, we have that $\hat{G}_{\hat{k},\ell}\geq G_{\hat{k},\ell}$ for all $\ell\geq s_{\hat{k}} - 1$. Since {$\hat{s}_{k+1} \geq s_{\hat{k}}$} for all $\hat{k}\leq k$, we have that $\hat{G}_{\hat{k},\ell}\geq G_{\hat{k},\ell}$ for all $\ell \geq\hat{s}_{k+1} - 1$. Therefore, in the case of $\ell = \hat{s}_{k+1}$
\[
    B_{k+1,\ell} \leq C_{k+1}\sum_{k'=1}^{k}\lambda_{k'}^\star G_{k',\ell-1} \leq C_{k+1}\sum_{k'=1}^{k}\lambda_{k'}^\star\hat{G}_{k',\ell-1} = \hat{B}_{k,\ell}
\]
Next, we show that $\hat{B}_{k+1,\ell}\geq B_{k+1,\ell}$ for all $\ell\geq \hat{s}_{k+1}$. If $\hat{B}_{k+1,\ell}\geq 2$, then we directly have $\hat{B}_{k+1,\ell}\geq B_{k+1,\ell}$ since $B_{k+1,\ell} \leq 2$. Otherwise, suppose $\hat{B}_{k+1,\ell}\leq 2$. Since $\hat{s}_{k+1} \geq s_{k'}$ for all $k'\leq k$, by the definition of $\hat{G}_{k,\ell}$, we have
\[
    \hat{G}_{k',\hat{s}_{k+1}-1} = m_{k'}^{\hat{s}_{k+1} - s_{k'}}\paren{\hat{s}_{k+1} - s_{k'}+1}\hat{G}_{k',s_{k'}-1}
\]
Based on the definition of $\hat{B}_{k+1,s_k}$, and since $m_k\geq m_{k'}$ for all $k \geq k$, we have that
\begin{align*}
    \hat{B}_{k+1,\ell} & = m_k^{\ell-\hat{s}_{k+1}}\paren{\ell-\hat{s}_{k+1}+1}\hat{B}_{k+1,\hat{s}_{k+1}}\\
    & = m_k^{\ell-\hat{s}_{k+1}}\paren{\ell-\hat{s}_{k+1}+1}C_{k+1}\sum_{k'=1}^k\lambda_{k'}^\star\hat{G}_{k',\hat{s}_{k+1}-1}\\
    & \geq C_{k+1}\sum_{k'=1}^k\lambda_{k'}^\star m_{k'}^{\ell-\hat{s}_{k+1}}\paren{\ell-\hat{s}_{k+1}+1}\hat{G}_{k',\hat{s}_{k+1}-1}\\
    & \geq C_{k+1}\sum_{k'=1}^k\lambda_{k'}^\star m_{k'}^{\ell-s_{k'}}\paren{\ell-\hat{s}_{k+1}+1}\paren{s_{k+1} - s_{k'} + 1}\hat{G}_{k',s_{k'}-1}\\
    & \geq C_{k+1}\sum_{k'=1}^{k}\lambda_{k'}^\star m_{k'}^{\ell-s_{k'}}\paren{\ell-s_{k'}+1}\hat{G}_{k',s_{k'}-1}\\
    & = C_{k+1}\sum_{k'=1}^k\lambda_{k'}^\star\hat{G}_{k',\ell-1}
\end{align*}
where the third to the last inequality is due to $\paren{\ell-\hat{s}_{k+1}+1} + \paren{s_{k+1} - s_{k'} + 1} - 1 = \ell - s_{k'} + 1$, and for all $a\geq 1, b\geq 1$, we will have $ab \geq a + b - 1$. By the inductive hypothesis, we have that $\hat{G}_{k',\ell} \geq G_{k',\ell}$ for all $\ell\geq \hat{s}_{k+1} \geq s_{k'} - 1$. Therefore, it must hold that
\[
    \hat{B}_{k+1,\ell} \geq C_{k+1}\sum_{k'=1}^k\lambda_{k'}^\star G_{k',\ell-1} \geq B_{k+1,\ell}
\]
This proves Condition \#1 for $\hat{k} = k+1$. Next, we will prove Condition \#2 for $\hat{k} = k+1$. To start, when $\ell = s_{k+1} - 1$, we have
\[
    \hat{G}_{k+1,\ell} = D_{k+1,s_{k+1} - 1} + \hat{B}_{k+1,s_{k+1}-1} + \hat{B}_{k+1,s_{k+1}-2}
\]
while by (\ref{eq:base_recur}) we have
\[
    G_{k+1,\ell}\leq D_{k+1,s_{k+1}-1} + B_{k+1,s_{k+1}-1}
\]
Since $\hat{B}_{k+1,s_{k+1}-2} \geq 0$ and $\hat{B}_{k+1,s_{k+1}-1}\geq B_{k+1,s_{k+1}-1}$ as proved above for {$s_{k+1}\geq \hat{s}_{k+1}-2$}, we must have that $\hat{G}_{k+1,\ell}\geq G_{k+1,\ell}$ when $\ell = s_{k+1} - 1$. Next, we show that $\hat{G}_{k+1,\ell}\geq G_{k+1,\ell}$ when $\ell > s_{k+1} - 1$. To start,
\begin{align*}
    G_{k+1,\ell} & \leq \mathcal{F}_{k+1}^{\ell-s_{k+1}+1}D_{k+1,s_{k+1}-1} + \sum_{\ell'=s_{k+1}-1}^{\ell-1}\mathcal{F}^{\ell-\ell'}_{k+1}\paren{B_{k+1,\ell'} + B_{k+1,\ell'-1}} + B_{k+1,\ell}\\
    & \leq  \mathcal{F}_{k+1}^{\ell-s_{k+1}+1}D_{k+1,s_{k+1}-1} + \sum_{\ell'=s_{k+1}-1}^{\ell}\mathcal{F}^{\ell-\ell'}_{k+1}\paren{B_{k+1,\ell'} + B_{k+1,\ell'-1}}\\
    & \leq  \mathcal{F}_{k+1}^{\ell-s_{k+1}+1}D_{k+1,s_{k+1}-1} + \sum_{\ell'=s_{k+1}-1}^{\ell}\mathcal{F}^{\ell-\ell'}_{k+1}\paren{\hat{B}_{k+1,\ell'} + \hat{B}_{k+1,\ell'-1}}
\end{align*}
By definition of $\hat{B}_{k+1,\ell}$, invoking Lemma \ref{lem:Bk_evolve} with $s = s_{k+1}-2$ and $s = s_{k+1}-1$, we have that, as long as $s_{k+1}$ satisfies $m_k^{s_{k+1}-\hat{s}_{k+1}-2}\leq \frac{1}{s_{k+1} - \hat{s}_{k+1} - 1}$ and $s_{k+1}\geq \frac{(k+1)m_{k}}{1-m_{k}}+\hat{s}_{k+1} +2$, it holds that
\begin{align*}
    \hat{B}_{k+1,\ell} \leq \gamma_{k}^{\ell-s_{k+1}+2}\hat{B}_{k+1,s_{k+1}-2}\\
    \hat{B}_{k+1,\ell} \leq \gamma_{k}^{\ell-s_{k+1}+1}\hat{B}_{k+1,s_{k+1}-1}
\end{align*}
for all $\ell\geq s_k$. Therefore
\begin{align*}
    G_{k+1,\ell} & \leq \mathcal{F}_{k+1}^{\ell-s_{k+1}+1}D_{k+1,s_{k+1}-1} + \sum_{\ell'=s_{k+1}-1}^{\ell}\mathcal{F}^{\ell-\ell'}_{k+1}\paren{\hat{B}_{k+1,\ell'} + \hat{B}_{k+1,\ell'-1}}\\
    & = \mathcal{F}_{k+1}^{\ell-s_{k+1}+1}D_{k+1,s_{k+1}-1} + \sum_{\ell'=s_{k+1}-1}^{\ell}\mathcal{F}^{\ell-\ell'}_{k+1}\gamma_k^{\ell'-s_{k+1}+1}\paren{\hat{B}_{k+1,s_{k+1}-1} + \hat{B}_{k+1,s_{k+1}-2}}\\
    & \leq \max\{\mathcal{F}_{k+1},\gamma_{k}\}^{\ell-s_{k+1}+1}\paren{D_{k+1,s_{k+1}-1} + (\ell-s_{k+1}+1)\paren{\hat{B}_{k+1,s_{k+1}-1} + \hat{B}_{k+1,s_{k+1}-2}}}\\
    & \leq \max\{\mathcal{F}_{k+1},\gamma_{k}\}^{\ell-s_{k+1}+1}(\ell-s_{k+1}+1)\paren{D_{k+1,s_{k+1}-1} + \hat{B}_{k+1,s_{k+1}-1} + \hat{B}_{k+1,s_{k+1}-2}}\\
    & = \hat{G}_{k+1,\ell}
\end{align*}
where in the last equality we use $m_{k+1} = \max\{\mathcal{F}_{k+1},\gamma_{k}\}$ and 
\[
    \hat{G}_{k+1,s_{k+1}-1} = D_{k+1,s_{k+1}-1} + \hat{B}_{k+1,s_{k+1}-1} + \hat{B}_{k+1,s_{k+1}-2}
\]
This proves Condition \#2 under $\ell > s_{k+1} -1$, which finishes the induction step and completes the proof.

\subsection{Proof of Lemma~\ref{lem:lambert_bound}}
\label{sec:proof_lamber_bound}
First, we prove the case $\epsilon\geq -\frac{m}{e\log m}$. Notice that the function $g(x)$ achieves global maximum at $x = \frac{1}{\log\sfrac{1}{m}}-1$ with value $\frac{1}{\log\sfrac{1}{m}}m^{\frac{1}{\log\sfrac{1}{m}}-1}$. Moreover, notice that
\[
    \frac{1}{\log\sfrac{1}{m}}m^{\frac{1}{\log\sfrac{1}{m}}-1} = -\frac{m^{\frac{1}{-\log m}}}{m\log m} = -\frac{e^{-\frac{1}{\log m}\cdot\log m}}{m\log m} = -\frac{1}{em\log m} \leq \epsilon
\]
Therefore, for all $x\geq 0$ we would have $g(x)\leq \epsilon$. Next, we consider the case $\epsilon\leq -\frac{1}{em\log m}$. In this case, $x \geq \frac{1}{\log m}W_{-1}\paren{\epsilon m\log m}-1$ implies that
\[
    (x+1)\log m\leq W_{-1}\paren{\epsilon m\log m}
\]
By the monotonicity of $W_{-1}$, we have
\[
    (x+1)\log m\cdot e^{(x+1)\log m}\geq \epsilon m\log m
\]
which gives $(x+1)e^{x\log m}\leq \epsilon$. Thus, we have $g(x) = (x+1)m^x \leq \epsilon$.
%%%%%%%%%%%%%%%%%%%%%%%%%%%%%%%%%%%%%%%%%%%%%%%%%%%%%%%%%%%%
\section{Ablation studies.}
\label{sec:ablation}
We conducted additional ablation studies for the parallel deflation algorithms, with the results presented in Figure \ref{fig:ablation_exp}. In Figure~\ref{fig:benefit_parallel}, we conduct additional experiments comparing how different choices of the number of local updates $T$ contribute to the convergence speed. In particular, since in each communication round, the local updates of all workers are done in parallel, $T\times \ell$ would represent the total time elapsed under the ideal scenario of no communication cost. We could see from Figure~\ref{fig:benefit_parallel} that a smaller $T$ results in a faster convergence speed. Remarkably, since we choose $T\times L = 1200$ and aim at recovering 30 eigenvectors, the case where $T= 40$ corresponds to $L =30$, demonstrating the convergence behavior of the sequential deflation algorithm. Figure~\ref{fig:benefit_parallel} thus supports that introducing additional parallelism into the deflation algorithm indeed speeds up the computation process. On the other hand, Figure~\ref{fig:benefit_local} considers the case where the communication cost is the major burden. In this case, we can run the parallel deflation algorithm with a larger number of local updates, hoping to make more progress within one communication round. Indeed, Figure~\ref{fig:benefit_local} shows that a large number of local updates result in a faster convergence within a fixed number of communication rounds on larger datasets.

\begin{figure}
    \centering
    \begin{subfigure}{0.32\textwidth}
        \includegraphics[width=\linewidth]{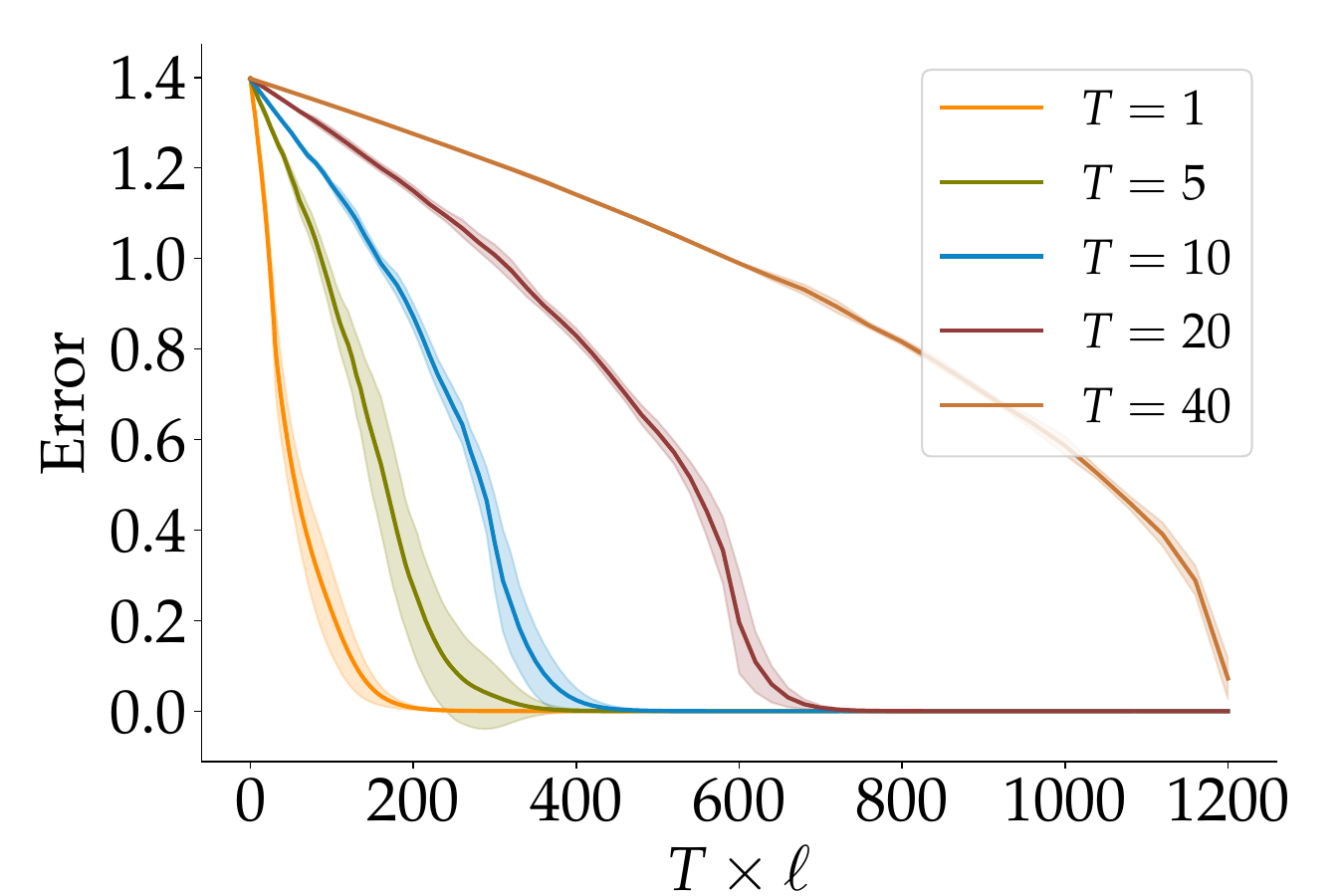}
        \caption{}
        \label{fig:benefit_parallel}
    \end{subfigure}
    \hspace{0.3cm}
    \begin{subfigure}{0.32\textwidth}
        \includegraphics[width=\linewidth]{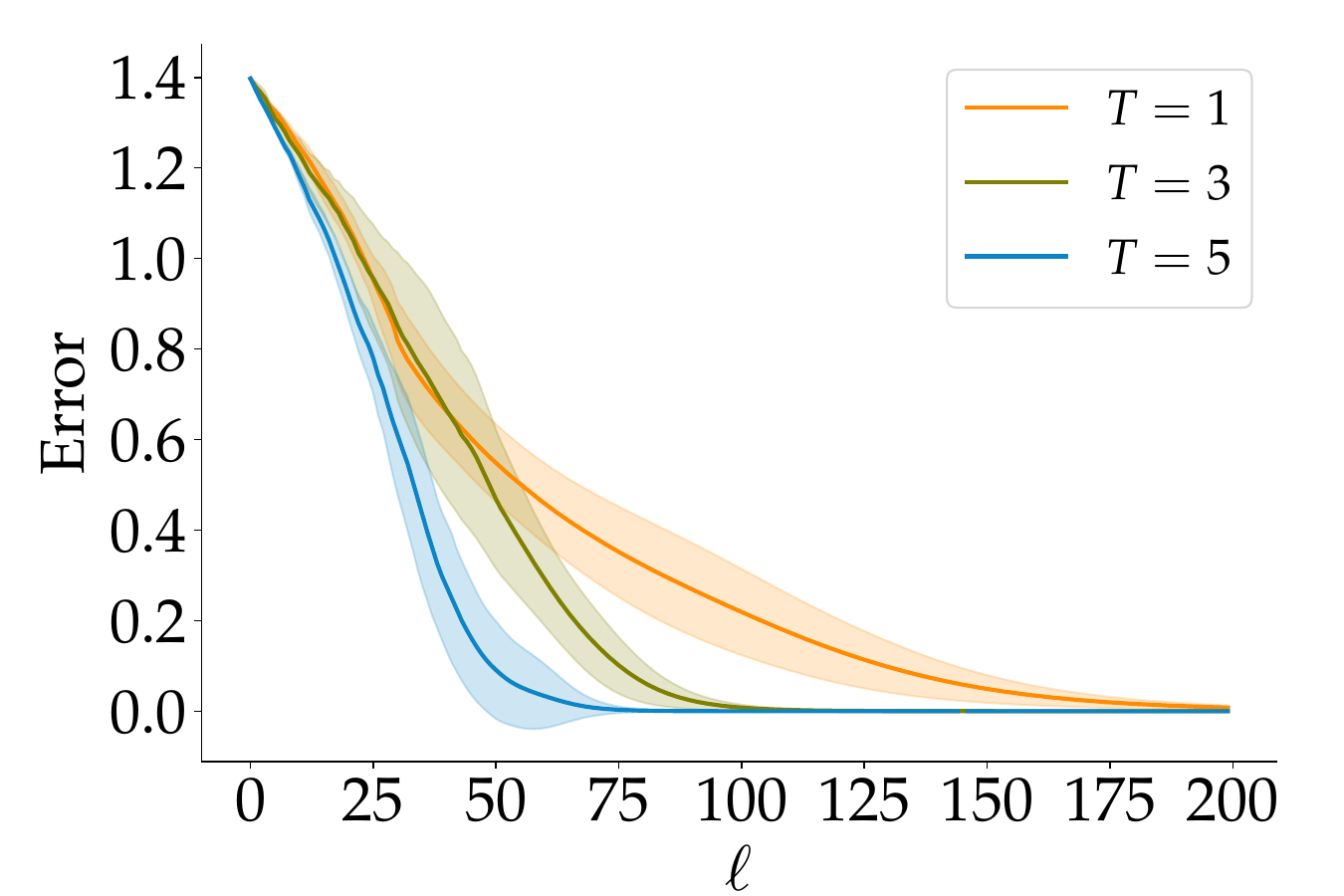}
        \caption{}
        \label{fig:benefit_local}
    \end{subfigure}
    \caption{Ablation study of the parallel deflation algorithm. (a) shows the benefit of the run-time by increasing the parallelism. (b) shows the benefit of decreasing the communication cost by increasing the number of local iterations.}
    \label{fig:ablation_exp}
    \vspace{-0.5cm}
\end{figure}

\section{Stochastic Parallel Deflation Algorithm}
In this section, we provide the explicit form of the stochastic version of the parallel deflation algorithm as discussed in Section~\ref{sec:algorithm}. Notice that in this algorithm we choose Hebb's rule as the $\texttt{Top}-1$ subroutine for the convenience of a clearer presentation. However, any subroutine that use $\bm{\Sigma}_{k,\ell}$ only for a matrix-vector multiplication can enjoy a similar efficient implementation.
\begin{algorithm}[ht!]
   \caption{Stochastic Parallel Deflation with Hebb's Rule}
   \label{alg:sto_para_defl}
    \begin{algorithmic}[1]
            \Require Batch of data in the $\paren{\ell,t}$th iteration $\hat{\bfY}_{\ell,t}$; \# of eigenvectors (workers) $K$; \# of iterations $T$; global communication rounds $L\geq K$, step size $\eta$.
            \Ensure Approximate eigenvectors $\left\{\bfv_k\right\}_{k=1}^K$.
            \For{$k=1,\dots, K$}
                \State Randomly initialize $\hat{\bfv}_{k,\texttt{init}}$ with unit norm;
            \EndFor 
            \For{$\ell=1,\dots, L$}
                \For{$k =1,\dots, K$}
                    \If{$k\leq \ell$}
                        \State \text{Receive $\bfv_{1,\ell-1},\dots,\bfv_{k-1,\ell-1}$}
                        \State $\bfv_{k,\ell,0} := \bfv_{k,\ell-1}$;
                        \For{$t = 1,\dots,T$}
                            \State $\hat{\lambda}_{k',\ell,t} = \|\hat{\bfY}_{\ell,t}\bfv_{k',\ell-1}\|_2^2\;\;\forall k'\in[k-1]$;
                            \State $\bfg_{k,\ell,t} = \hat{\bfY}_{k,\ell,t}^\top\hat{\bfY}_{k,\ell,t}\bfv_{k,\ell,t-1} - \sum_{k'=1}^{k-1}\hat{\lambda}_{k',\ell,t}\paren{\bfv_{k',\ell-1}^\top\bfv_{k,\ell,t-1}}\cdot\bfv_{k',\ell-1}$
                            \State $\bfv_{k,\ell,t} := (\bfv_{k,\ell,t-1} - \eta\bfg_{k,t,\ell}) / \norm{\bfv_{k,\ell,t-1} - \eta\bfg_{k,t,\ell}}_2$
                        \EndFor
                        \State \text{Broadcast $\bfv_{k,\ell}:=\bfv_{k,\ell,T}$}
                    \Else
                        \State $\bfv_{k,\ell} := \hat{\bfv}_{k,\texttt{init}}$;
                    \EndIf
                \EndFor
            \EndFor
            \State\textbf{return} $\left\{\bfv_{k,L}\right\}_{k=1}^K$
\end{algorithmic} 
\end{algorithm}

\section{Baseline Algorithms}
\label{sec:baseline}
We provide the generalization of the EigenGame-$\alpha$ \cite{gemp_2020_eigengame} and EigenGame-$\mu$ \cite{gemp2022eigengame} algorithms with multiple iterations of local updates $T\geq 1$ in Algorithm~\ref{alg:eiggame_alpha_det} and Algorithm~\ref{alg:eiggame_mu_det}. In particular, it should be noted that EigenGame-$\alpha$ and EigenGame-$\mu$ use covariance matrices computed on subsets of the data in each iteration, where in our case we assume that the covariance matrix is computed on the whole dataset before the algorithm runs. Moreover, if we set $T= 1$ in both Algorithm~\ref{alg:eiggame_alpha_det} and Algorithm~\ref{alg:eiggame_mu_det}, then we recover the original EigenGame-$\alpha$ and EigenGame-$\mu$ algorithms.
\begin{algorithm}[ht!]
   \caption{EigenGame-$\alpha$}
   \label{alg:eiggame_alpha_det}
    \begin{algorithmic}[1]
            \Require $\bm{\Sigma}\in\R^{d\times d}$; \# of eigenvectors (workers) $K$; \# of iterations $T$; global communication rounds $L\geq K$, step size $\eta$.
            \Ensure Approximate eigenvectors $\left\{\bfv_k\right\}_{k=1}^K$.
            \For{$k=1,\dots, K$}
                \State Randomly initialize $\hat{\bfv}_{k,\texttt{init}}$ with unit norm;
            \EndFor 
            \For{$\ell=1,\dots, L$}
                \For{$k =1,\dots, K$}
                    \If{$k\leq \ell$}
                        \State \text{Receive $\bfv_{1,\ell-1},\dots,\bfv_{k-1,\ell-1}$}
                        \State $\bfv_{k,\ell,0} := \bfv_{k,\ell-1}$;
                        \For{$t = 1,\dots,T$}
                            \State$\bfg_{k,\ell,t} := \bm{\Sigma}\bfv_{k,\ell,t-1}-\sum_{k'=1}^{k-1}\frac{\bfv_{k',\ell-1}^\top\bm{\Sigma}\bfv_{k,\ell,t-1}}{\bfv_{k',\ell-1}^\top\bm{\Sigma}\bfv_{k',\ell-1}}\cdot \bm{\Sigma}\bfv_{k',\ell-1}$;
                            \State $\bfv_{k,\ell,t} := (\bfv_{k,\ell,t-1} - \eta\bfg_{k,t,\ell}) / \norm{\bfv_{k,\ell,t-1} - \eta\bfg_{k,t,\ell}}_2$
                        \EndFor
                        \State \text{Broadcast $\bfv_{k,\ell}:=\bfv_{k,\ell,T}$}
                    \Else
                        \State $\bfv_{k,\ell} := \hat{\bfv}_{k,\texttt{init}}$;
                    \EndIf
                \EndFor
            \EndFor
            \State\textbf{return} $\left\{\bfv_{k,L}\right\}_{k=1}^K$
\end{algorithmic} 
\end{algorithm}

\begin{algorithm}[ht!]
   \caption{EigenGame-$\mu$}
   \label{alg:eiggame_mu_det}
    \begin{algorithmic}[1]
            \Require $\bm{\Sigma}\in\R^{d\times d}$; \# of eigenvectors (workers) $K$; \# of iterations $T$; global communication rounds $L\geq K$, step size $\eta$.
            \Ensure Approximate eigenvectors $\left\{\bfv_k\right\}_{k=1}^K$.
            \For{$k=1,\dots, K$}
                \State Randomly initialize $\hat{\bfv}_{k,\texttt{init}}$ with unit norm;
            \EndFor 
            \For{$\ell=1,\dots, L$}
                \For{$k =1,\dots, K$}
                    \If{$k\leq \ell$}
                        \State \text{Receive $\bfv_{1,\ell-1},\dots,\bfv_{k-1,\ell-1}$}
                        \State $\bfv_{k,\ell,0} := \bfv_{k,\ell-1}$;
                        \For{$t = 1,\dots,T$}
                            \State$\bfg_{k,\ell,t} := \bm{\Sigma}\bfv_{k,\ell,t-1}-\sum_{k'=1}^{k-1}\bfv_{k',\ell-1}^\top\bm{\Sigma}\bfv_{k,\ell,t-1}\cdot \bfv_{k',\ell-1}$;
                            \State $\bfv_{k,\ell,t} := (\bfv_{k,\ell,t-1} - \eta\bfg_{k,t,\ell}) / \norm{\bfv_{k,\ell,t-1} - \eta\bfg_{k,t,\ell}}_2$
                        \EndFor
                        \State \text{Broadcast $\bfv_{k,\ell}:=\bfv_{k,\ell,T}$}
                    \Else
                        \State $\bfv_{k,\ell} := \hat{\bfv}_{k,\texttt{init}}$;
                    \EndIf
                \EndFor
            \EndFor
            \State\textbf{return} $\left\{\bfv_{k,L}\right\}_{k=1}^K$
\end{algorithmic} 
\end{algorithm}

\end{document}